\newtheorem{definition}{Definition}
\newtheorem{theorem}{Theorem}
\newcommand{\ugg}{\fontsize{9.5pt}{9.5pt}\selectfont}
\newcommand{\tgg}{\fontsize{7.5pt}{7.5pt}\selectfont}
\begin{document}
%
\title{PIGMIL: Positive Instance Detection via Graph Updating for Multiple Instance Learning}



\author{\IEEEauthorblockN{Dongkuan Xu\IEEEauthorrefmark{1},
Jia Wu\IEEEauthorrefmark{2},
Wei Zhang\IEEEauthorrefmark{3}, and
Yingjie Tian\IEEEauthorrefmark{4}\IEEEauthorrefmark{5}}
\IEEEauthorblockA{\IEEEauthorrefmark{1} School of Mathematical Sciences, University of Chinese Academy of Sciences, Beijing 100049, China\\
Email: xudongkuan14@mails.ucas.ac.cn}
\IEEEauthorblockA{\IEEEauthorrefmark{2} QCIS, University of Technology, Sydney, NSW 2007, Australia\\
Email: jia.wu@uts.edu.au}
\IEEEauthorblockA{\IEEEauthorrefmark{3} School of Information, Renmin University of China, Beijing 100872, China\\
Email: zw-info@ruc.edu.cn}
\IEEEauthorblockA{\IEEEauthorrefmark{4} Research Center on Fictitious Economy and Data Science, Chinese Academy of Sciences, Beijing 100190, China\\}
\IEEEauthorblockA{\IEEEauthorrefmark{5} Key Laboratory of Big Data Mining and Knowledge management, Chinese Academy of Sciences, Beijing 100190, China\\
Email: tyj@ucas.ac.cn}}

\maketitle

\begin{abstract}

Positive instance detection, especially for these in positive bags (true positive instances, {\it TPI}s), plays a key role for multiple instance learning ({\it MIL}) arising from a specific classification problem only provided with bag (a set of instances) label information. However, most previous {\it MIL} methods on this issue ignore the global similarity among positive instances and that negative instances are non-i.i.d., usually resulting in the detection of {\it TPI} not precise and sensitive to outliers.
To the end, we propose a positive instance detection via graph updating for multiple instance learning, called {\it PIGMIL}, to detect {\it TPI} accurately. {\it PIGMIL} selects instances from working \ sets ($\mathcal{WS}s$) of some working bags ($\mathcal{WB}s$) as positive candidate pool ({\it PCP}). The global similarity among positive instances and the robust discrimination of instances of {\it PCP} from negative instances are measured to construct the consistent similarity and discrimination graph ({\it CSDG}).
As a result, the primary goal (i.e. {\it TPI} detection) is transformed into {\it PCP} updating, which is approximated efficiently by updating {\it CSDG} with a random walk ranking algorithm and an instance updating strategy. At last bags are transformed into feature representation vector based on the identified {\it TPI}s to train a classifier. Extensive experiments demonstrate the high precision of {{\it PIGMIL}}'s detection of {\it TPI}s and its excellent performance compared to classic baseline {\it MIL} methods.
\end{abstract}

\begin{IEEEkeywords}
Positive Instance Detection; Multiple Instance Learning; Graph Learning; True Positive Instance;
\end{IEEEkeywords}

\IEEEpeerreviewmaketitle

\section{Introduction}

Multiple instance learning (MIL) was formally proposed for drug activity detection \cite{dietterich1997solving} at first. Contrary to traditional classification problem, MIL deals with bag, or set of instances, classification to label a bag positive or negative where not all instance label information is exploit. Based on the general MIL setting, a bag is labelled positive if it contains at least one positive instance, or else it is considered as a negative one. However, the specific label of individual instance in positive bags is unknown. Because of its ability to cope with instance label ambiguity, MIL has been applied into various applications in pattern recognition and computer vision, e.g.,
image categorization \cite{chen2006miles,shen2016multiple},
object detection \cite{zhang2005multiple,yi2015human},
graph classification \cite{wu2013multi,wu2014bag,wu2015boosting},
text categorization \cite{andrews2002support,zhou2009multi,rastegari2015discriminative},
etc.

\begin{figure}[!t]\vspace{0.1cm}
\centering
\includegraphics[width=1\linewidth, height=6cm]{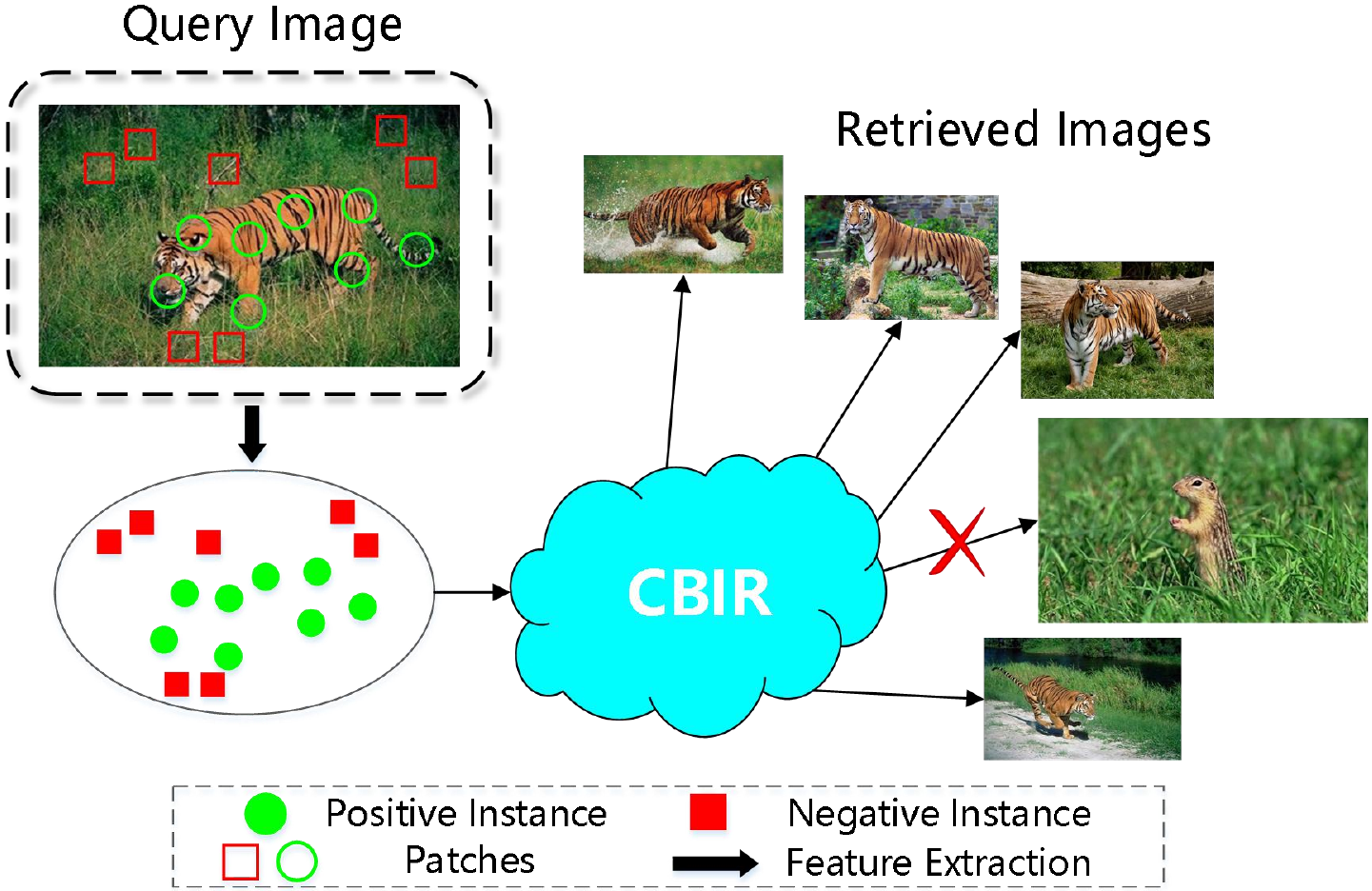} \vspace{-0.4cm}
\caption{Positive instance plays an important role for CBIR based on MIL. An image of \emph{tiger} is divided into some patches, each patch corresponds to an instance, and this image is considered as a positive bag for \emph{tiger}. Patches involved in \emph{tiger} correspond to positive instances as well as TPIs for this image. CBIR takes use of these instances to search retrieved images for a query image. And there may also be an irrelevant one, like the one of \emph{squirrel}, in the retrieved images because of FPIs.}\vspace{-0.5cm}
\label{fig:tiger}
\end{figure}

The positive instances in positive bags are called the true positive instance denoted as TPIs, with the negative instances in positive bags (false positive instances) denoting FPIs.
The intrinsic problem of MIL is to determine whether a bag contains TPIs or not. The typical application of {TPI}'s detection is content-based image retrieval (CBIR) \cite{li2009convex}, of which the main objective is to locate the regions of interest (ROIs) in images that show a great discriminative ability to label images. As shown in Figure \ref{fig:tiger}, the image with $tiger$ is divided into several patches based on feature extraction methods. According to the MIL framework, the whole image is considered as a positive bag and each patch is taken as an instance. The patches involved in $tiger$, called TPIs, corresponds to ROIs and are significant in the image retrieval for $tiger$. The rest patches are called FPI providing little information for CBIR.
There are extensive studies on TPIs of MIL \cite{dietterich1997solving,wang2000solving,zhang2001dd,andrews2002support,chen2006miles,li2010mild,fu2011milis,rastegari2015discriminative}.
APR \cite{dietterich1997solving} constructs a axis-parallel rectangle to encompass instances from different positive bags as many as possible while minimizing the number of instances from negative bags. The rectangle is considered as the region where TPIs are located.
DD-based method \cite{zhang2001dd} extends the basic idea of APR, tries to recognize the instances with high $diverse \ density$ value, i.e., instances near all positive bags while distant from negative bags, and regards these instance as TPIs.
SVM-based methods \cite{andrews2002support,li2009convex} utilize SVM to discriminate TPIs.
mi-SVM \cite{andrews2002support} searches for a hyperplane at instance-level where each positive bag has at least one instance located in positive space while all negative {bags}' instances are in the negative. KI-SVMs \cite{li2009convex} proposes two different level convex optimization models based on SVM and maximizes the margin by the most violated key instance to locate key instances at different levels.
MILD \cite{li2010mild} focuses on the ambiguous information of instances in the positive bags. It selects an instance with the highest maximum empirical precision of each positive bag as the TPI and constructs a two-level classification scheme based on the selected TPIs inspired by MILES \cite{chen2006miles}.
MILIS \cite{fu2011milis} first selects instance prototypes (IPs) by Gaussian-kernel-based kernel density estimator on negative instances, then updates these IPs, trains classifier in an iterative learning framework to construct the feature representation for each bag, and employs the SVM to classify a new bag at last.

\begin{figure}[!t]\vspace{0.5cm}
\centering
\includegraphics[width=0.95\linewidth, height=5cm]{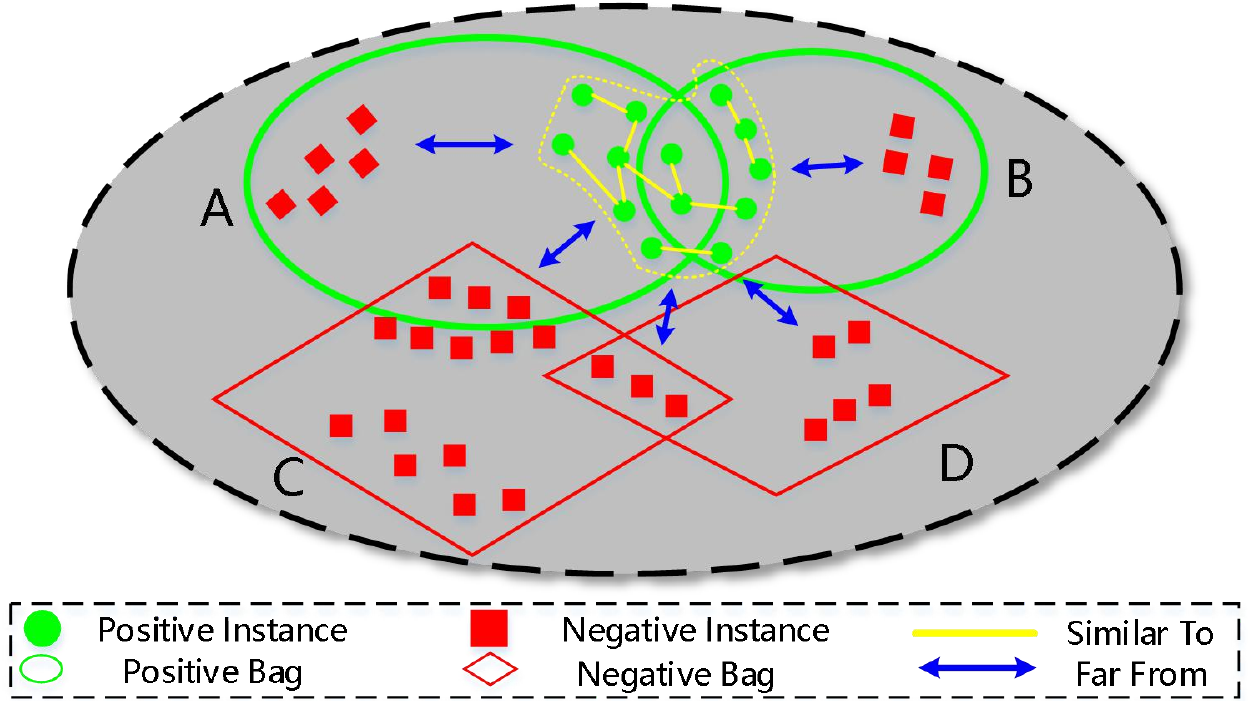} \vspace{-0.1cm}
\caption{Four bags are represented in feature space. Bag A, B are positive represented as green ellipses, and bag C, D are negative represented as red rhombuses. The true positive instances (TPIs) represented as green solid circles and compassed by the polygonal yellow line in bag A and B should be far from negative instances, which are represented as the blue double-headed arrows. Moreover, TPIs should also be similar to each other, which is represe nted as the straight yellow lines.}\vspace{-0.5cm}
\label{fig:sim_dis}
\end{figure}

However, these common MIL methods on identifying TPIs have some disadvantages.
APR only showed high performance for drug activity detection because it is hard to construct such a rectangle accurately for data sets in other application context.
DD-based methods \cite{zhang2001dd} are sensitive to noise, which means the $diverse \ density$ value will decrease dramatically if there are some negative instances nearby. Moreover, DD-based methods need to consider each instance in positive bags resulting a high computation cost.
MILD \cite{li2010mild} simply considers the instance with the highest empirical precision in each positive bag as the TPI. The empirical precision is calculated based on all training bags and a threshold $\theta_t$ which is hard to determined.
Generally, most TPI detection methods for MIL do not consider the similarity among TPIs or utilize it in depth.
Similarity among TPIs possesses the great significance on the TPI detection because it reveals the intrinsic property of TPIs while it may result from some coincidental patterns that are not irrelevant to the topic \cite{shrivastava2011data}. For instance, when we want to judge whether two images are similar because of the target content or not, these two images may be similar for sharing the irrelevant contents. These contents correspond to coincidental patterns which are not repetitive in feature space. This implies that a reliable similarity should be homogeneous across several parts, i.e., a global similarity.
Moreover, the discrimination between TPIs and negative instances is not robust to outliers for most methods.
Discrimination between TPIs and negative instances provides us an reliable way to decide whether an instance is negative or not because only the negative {instance}'s label is determined in MIL.
Although most TPI detection methods utilized the difference between positive and negative instances, but the influence of a far negative instance and a near negative on a TPI are not sufficiently characterized respectively. The influence of a far negative instance on an instance {$x$}'s label should decrease exponentially when it becomes farther from $x$, and a near negative instance should increase its influence on {$x$}'s label sharply when it becomes closer to $x$, i.e., a robust discrimination.
Furthermore, it is unnecessary to search all instances of all positive bags to identify TPIs while there are at least one TPI in each positive bag. This is because computation cost is too high to search all positive bags, not every instance in positive bags is positive, and TPIs from some positive bags may not be positive enough.

Inspired by these observations, this paper proposes a \underline{P}ositive \underline{I}nstance detection via \underline{G}raph updating for \underline{M}ultiple \underline{I}nstance \underline{L}earning (PIGMIL) whose core idea is to identify TPIs that should not only be similar to themselves globally but also different from negative instances robustly shown in Figure \ref{fig:sim_dis}.
PIGMIL determines and initializes $working \ sets$ ($\mathcal{WS}s$), $working \ bags$ ($\mathcal{WB}s$), and positive candidate pool (PCP) at first to reduce the computation cost and improve the accuracy of TPI detection.
The original TPI detection is approximated by maximizing global similarity among positive instances and robust discrimination of positive instances from negative ones based on PCP.
Then the maximum optimization problem is dealt with on a consistent similarity and discrimination graph ($CSDG$) with a random walk algorithm and an instance updating strategy.
Bags are embedded into instance-based feature space and transformed into representation vectors by TPIs to train the classifier.

The main contributions of PIGMIL are summarized:
\begin{itemize}
\item[1)] The global similarity among positive instances is utilized. Combining the $similarity$ ($\mathcal{S}$) and its $consistency$ ($\mathcal{C}$) provides a global similarity ($\mathcal{S}$+$\mathcal{C}$) and avoids the misleading of coincidental patterns on TPI detection.
\item[2)] The robust discrimination of positive instances from negative instances is exploited. The $discrimination$ ($\mathcal{D}$) is robust to outliers, decreasing a far {negative}'s influence sharply when it gets farther away from an instance and putting exponentially more importance on near negative instances if they become closer to an instance.
\item[3)] $\mathcal{WS}$, $\mathcal{WS}$, and PCP are determined to reduce computation cost and improve searching accuracy. The original objective of identifying TPIs is transformed into PCP updating and then approximated efficiently by updating graph $CSDG$ iteratively with an instance updating strategy..
\end{itemize}

In the rest of paper, we: define basic concepts and give an overview of PIGMIL in Section~\ref{pf}; describe PIGMIL at length in Section~\ref{pm}; conduct experiments in Section~\ref{ex}; make discussion in Section~\ref{dis}; and draw conclusion in Section~\ref{conl}.

\section{Problem Formulation}\label{pf}

In this section, we define some important notations, then provide a formal definition of the MIL problem.

\begin{definition}{\emph{(Instance and Bag)}}
Let $x_i$ = $(x_1, \cdots, x_d)^T$ and $X_j$ = $(x_{j1}, \cdots, x_{jn_j})$ denote an instance and a bag separately, where $d$ is the dimensionality, $n_j$ is the number of instances of $X_j$, and $x_{jk}$ is an instance belonging to $X_j$. Each instance and bag are labelled with $y \in \{+1,-1\}$ and $L \in \{+1,-1\}$ separately. $+1$ indicates the instance or bag is positive and $-1$ corresponds to negative \cite{andrews2002support}.
\end{definition}

\begin{definition}{\emph{(KDE$_{min}$)}}
Based on KDE \cite{duda2012pattern}, KDE$_{min}$ is defined as:
\begin{equation}\label{KDEmin}
\small
f_{KDE_{min}}(x) = \frac{1}{Z \times N^-} \sum\limits_{L_j=-1} \min\limits_{x_{ji} \in X_j} exp(-\gamma \|x - x_{ji}\|)
\end{equation}
where $N^{-}$ is the number of negative bags, and $Z,$ $\gamma$ are empirical parameters.
\end{definition}


\begin{definition}{\emph{(Working Set)}}
 The working set of bag $X_j$ is represented as $ \mathcal{WS}_j \in \{(x_1, \cdots, x_{n_j}) \ | \ ws(x_k) \leqslant T_{ws_j}, \forall k \in (1, \cdots, n_j) \}$, where $ws(\cdot)$ represents a decision function to decide whether an instance belongs to $\mathcal{WS}$ or not, $T_{ws_j}$ represents a threshold, and $n_j$ represents the size of $\mathcal{WS}_j$.
\end{definition}

\begin{definition}{\emph{(Working Bag)}}
 A positive bag $X_j$ is called a working bag represented as $ \mathcal{WB}_j $, iff the values of instances in $\mathcal{WS}_j$ based on the decision function $ws(\cdot)$ is not significantly worse than the values of instances in other positive {bags}' working sets.
\end{definition}

\begin{definition}{\emph{(Positive Candidate Pool)}}
  A positive candidate pool is a group of instances represented as $ PCP = \{x_{wb_1}^*, \cdots, x_{wb_{n_w}}^*\}$, where $x_{wb_j}^*$ is an instance from the $\mathcal{WS}$ of $\mathcal{WB}_{wb_j}$, and $n_w$ is the number of working bags.
\end{definition}

\begin{figure*}[!t]
\centering
\includegraphics[scale =0.75]{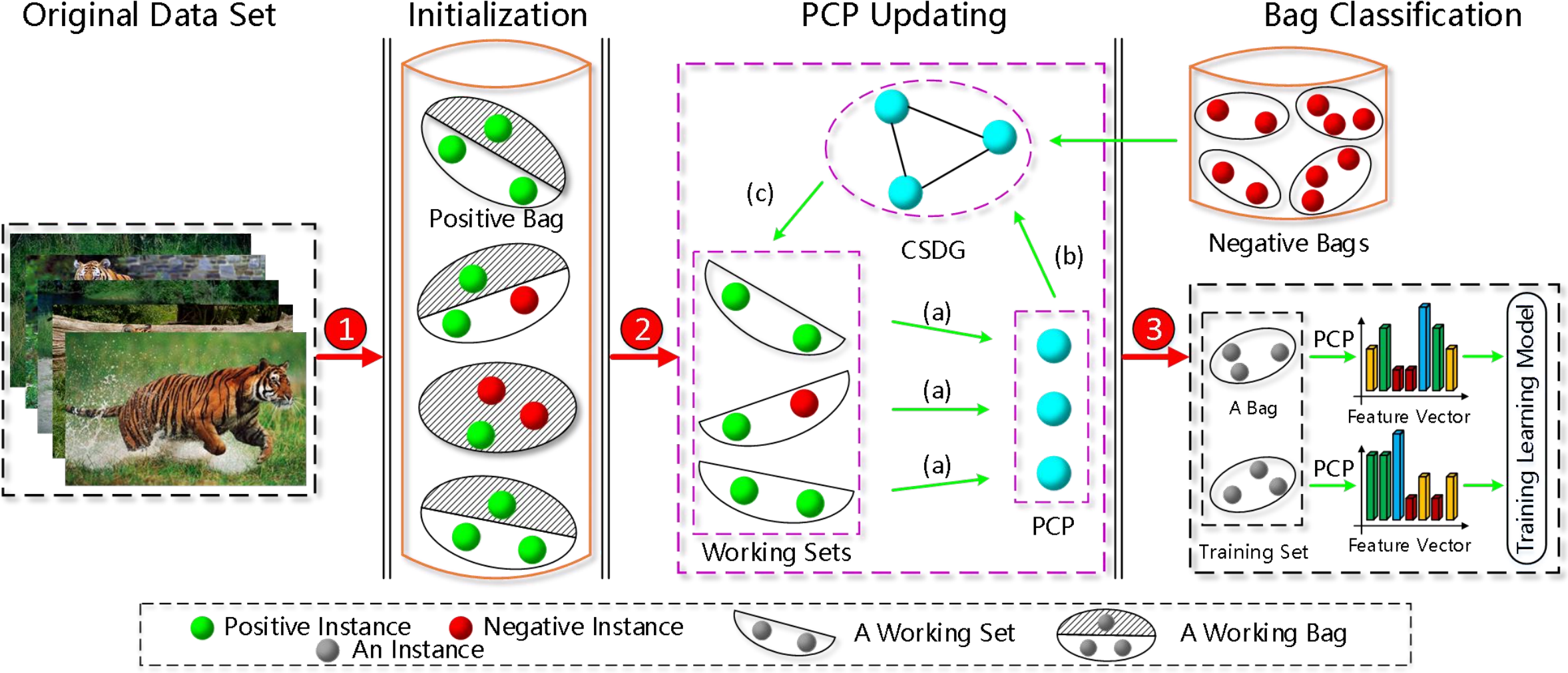}\vspace{-0.2cm}
\caption{A conceptual view of Positive Instance Detection via Graph Updating for Multiple Instance Learning (PIGMIL): The goal of PIGMIL is to construct a bag classification scheme to label a new bag $\footnotesize{\textcircled{\scriptsize{3}}}$ based on the instances in updated positive candidate pool (PCP) $\footnotesize{\textcircled{\scriptsize{2}}}$ after $working \ sets$ $(\mathcal{WS}s)$, $working \ bags$ $(\mathcal{WB}s)$, and PCP are initialized from original data set $\footnotesize{\textcircled{\scriptsize{1}}}$.
Specifically, original data set is preprocessed into bags (sets of instances) based on MIL at first. To improve the accuracy and reduce computation cost of searching the true positive instances (TPIs), $\mathcal{WS}s$, $\mathcal{WB}s$, and PCP (consisting of one instance from each $\mathcal{WS}$ (a)) are identified and initialized.
Instances in PCP are considered to be positive. Then to discern the instance $x_t$ in PCP that is not positive and needed to replace, the consistent similarity and discrimination graph (CSDG) is built (b). $x_t$ is identified by a random walk algorithm on CSDG and updated by an instance updating strategy (c). Eventually a bag is classified by a bag classification scheme, where bags are embedded into a updated PCP-based feature space and transformed into feature vectors to train a SVM classifier.}
\label{fig:overview}\vspace{-0.3cm}
\end{figure*}

Given a group of bags as $X = \{X_1, \cdots, X_N\}$, where each positive bag consists of at least one positive instance while all instances are negative in negative bags. The objective of MIL is to build a classification model based on training set only with bag labels to predict the labels of new bags.
The overall framework includes three major steps presented in Figure \ref{fig:overview}:

\begin{itemize}
\item\textbf{Initialization:} \textbf{1.} To improve the efficiency of updating PCP and reduce computation cost, $\mathcal{WS}s$ and $\mathcal{WS}s$ are initialized at first. By doing so, we can update PCP from the most possibly positive instances. \textbf{2.} We take one instance from the $\mathcal{WS}$ of each $\mathcal{WB}$ based on KDE$_{min}$ to initialize PCP.

\item\textbf{PCP Updating:} To maximize the global similarity among instances in PCP and the robust discrimination of these instances from negative ones, $CSDG$ is constructed to recognize the instance in PCP that shares the least similarity with other instances and least difference from negative instances with a random walk algorithm, and replace it by a new one according to an instance updating strategy.
\item\textbf{Bag Classification:} To label unknown bags, a bag classification scheme based on updated PCP is constructed. The distance between a bag and each instance of updated PCP is exploited to transform the bag into feature presentation vectors. A SVM classifier is learned on the vectors.

\end{itemize}

\section{The proposed method PIGMIL}\label{pm}

\subsection{Initialization}\label{pm:sub1}

Initialization refers to initialize $\mathcal{WS}s$, $\mathcal{WB}s$, and PCP.
It is necessary to determine the useful positive bags and their useful instance candidates to construct PCP for identifying TPIs because the computation cost is too high to search from all positive bags and not every instance in positive bags is positive enough or positive actually.

\subsubsection{\textbf{Working Set}}\label{pmsub1:sub1}

A $working \ set$ ($\mathcal{WS}$) refers to the useful instance candidates for a positive bag.
Instances in $\mathcal{WS}$ are with high possibility to be TPIs. We take advantage of negative instances to figure our this possibility because only the labels of instances in negative bags are known. However, negative instances may share very general distributions, we adopt KDE$_{min}$ (Eq. (\ref{KDEmin})) as decision function $ws(\cdot)$ to capture the relationship between a instance and negative ones to construct $\mathcal{WS}$. In other words, instance $x_i$ of bag $X_j$ belongs to $\mathcal{WS}_j$ if $f_{KDE_{min}}(x_i) \leqslant T_{ws_j}$.

\subsubsection{\textbf{Working Bag}}\label{pmsub1:sub2}

$Working \ bags$ ($\mathcal{WB}s$) correspond to the seleceted positive bags that are used to update PCP.
To determine all $\mathcal{WB}s$ from all positive bags, $T$-test \cite{winer1971statistical} is employed to check whether the average value of instances in a positive {bag}'s $\mathcal{WS}$ is significantly worse than the average value of all instances in rest positive {bags}' $\mathcal{WS}$ based on $ws(\cdot)$. If it is, this positive bag will not be considered as a $\mathcal{WB}$. In this paper, $f_{KDE_{min}}(\cdot)$ is chosen as $ws(\cdot)$.

\subsubsection{\textbf{Positive Candidate Pool}}\label{pmsub1:sub3}

$Positive\ candidate \ pool$ (PCP) includes some instances from the $\mathcal{WS}s$ of $\mathcal{WB}s$ and only one instance is chosen from a $\mathcal{WS}$. Instances in PCP are considered as positive ones, which means they should share high similarity among themselves and significant difference from negative instances.
PCP is used to construct a bag classification scheme after it is updated, i.e., instances in PCP are positive enough. Initially, the instance $x$ with the lowest $ws(\cdot)$ in $\mathcal{WS}_{wb_j}$ of $\mathcal{WB}_{wb_j}$ is chosen as $x_{wb_j}^*$.

\subsection{PCP Updating}\label{pm:sub2}

Some instances in the initialized PCP are not positive enough or not positive actually.
PCP updating refers to that instances in PCP are updated to be positive enough in general, i.e., sharing high similarity among themselves and great difference from negative ones. But the original updating PCP is a difficult combinational optimization problem. So we transform it into an approximation based on consistent similarity and discrimination graph (CSDG). Additionally, an instance updating strategy is proposed to accelerate the updating.

\subsubsection{\textbf{Optimization Objective of Updating PCP}}\label{pmsub2:sub1}

The goal of updating PCP is to maximize the overall similarity of instances in PCP and their difference from negative instances. However, it is a challenging task for most learning problems to learn the overall similarity directly. We adopt a kind of pairwise similarity $\mathcal{S}$ to approximate it. To improve the approximation, the consistency $\mathcal{C}$ for $\mathcal{S}$ is employed to discriminate $\mathcal{S}$ that is homogeneous across different parts. The difference of an instance from negative instances is represented as $\mathcal{D}$.
Therefore, the original goal of updating PCP can be formulated to find the best labeling $\mathcal{L}$ for training instances to maximize $\mathcal{S}$, $\mathcal{C}$, and $\mathcal{D}$ for the instances in PCP:
\begin{equation}
\label{goal-1}
\begin{split}
&\max_{\mathcal{L}} \sum_{(x_i,x_j),x_k \in PCP}  \alpha\mathcal{S}(x_i,x_j) + \mathcal{C}(x_i,x_j) + \beta\mathcal{D}(x_k)\\
\end{split}
\end{equation}
where $(x_i,x_j)$ is a pair of instances in PCP, $x_k$ is an instance in PCP, $\alpha$, $\beta$ are balancing factors, and $\mathcal{S}$, $\mathcal{C}$, $\mathcal{D}$ are $similarity$, $consistency$, $discrimination$ respectively. $\mathcal{S}$ + $\mathcal{C}$ indicates the global similarity and $\mathcal{D}$ indicates the robust discrimination.

\textbf{Similarity:}
Because only the labels of negative instances are known, we calculate the similarity between two instances $\mathcal{S}(x_i, x_j)$ based on how similarly different they are from negative instances. Inspired by \cite{rastegari2015discriminative}, we use $x_i$ as a positive instance and all the negative instances to learn a classifier based on SVM. The unbalance of positive instances and negative ones is coped with by resampling $x_i$. The confidence of $x_j$ based on the learned classifier is $\Upsilon _{i,j}  = w_i^T \cdot x_j$, where $w_i$ is the learned weight based on $x_i$.

\begin{definition}{\emph{(Similarity)}}
    The similarity between instance $x_i$ and $x_j$ is:
    \begin{equation}\label{simil}
    \mathcal{S}(x_i, x_j)=\left\{
    \begin{array}{rcl}
    \frac{1}{\varphi(i,j) \cdot \varphi(j,i)}  & {if\ \Upsilon_{i,j} > 0 \ and \ \Upsilon_{j,i} > 0}\\
    0            & {otherwise}
    \end{array} \right.
    \end{equation}
    where $\varphi(j,i)$ stands for the order of $x_j$ among other instances whose confidence is positive when they are classified by $w_i$.
\end{definition}

\textbf{Consistency:}
To improve the accuracy of similarity, the consistency for each pairwise similarity is figured out.
Sometimes the similarity between two objects may be confused for coincidental patterns. Therefore, the intrinsical similarity should be consistent across several parts.
In this paper, the size of the maximal quasi-clique including the two instances is adopted as $consistency$ for their similarity:

\begin{definition}{\emph{(Consistency)}}
  In a graph $Graph = (V,E)$, the consistency for $v_i$ and $v_j$ is the size of the maximal quasi-clique and defined as:
\begin{equation}\label{consis}
\mathcal{C}(v_i, v_j)=\left\{
\begin{array}{rcl}
\max \limits_{k}\{|Q_k|\}  & {\forall k \ v_i, v_j \in Q_k}\\
0                          & {\not\exists k \ v_i, v_j \in Q_k}
\end{array} \right.
\end{equation}
where $Q_k$ represents different maximal quasi-cliques consisting of $v_i$ and $v_j$.
\end{definition}

A quasi clique corresponds to a undirected graph $Graph = (V,E)$, where $|E|\geqslant \left \lfloor \gamma \binom{|V|}{2} \right \rfloor$ and $0 < \gamma \leqslant 1$ \cite{brunato2007effectively}. In this paper, we set $\gamma$ to be 0.9.
The vertexes in quasi-clique share dense similarities among themselves.
And the maximal quasi-clique is a quasi-clique when there is no node can be added to extend the quasi-clique.
In other words, the size of the maximal quasi-clique for two objects is large when the similarities of two objects are consistent, i.e., existing several homogeneous similarities.

\textbf{Discrimination:}
Beyond that positive instances should be similar to themselves, positive instances should also be different from negative ones.
To measure the difference between an instance and negative instances, inspired by Gaussian-kernel-based kernel density estimator (KDE) \cite{duda2012pattern}, the discrimination of an instance is defined as:

\begin{definition}{\emph{(Discrimination)}}\label{Discrimination}
  The discrimination of instance $x_i$ from other negative instances is:
\begin{equation}\label{discri-1}
\small
\mathcal{D}(x) = \frac{1}{Z\sum\limits_{j=1}^{N^-}n_j} \sum\limits_{L_j=-1} \sum\limits_{i=1}^{n_j}d(x,x_{ji})
\end{equation}

\begin{equation}\label{discri-2}
\small
d(\Delta)=\left\{
\begin{array}{rcl}
-exp[-\gamma_1 (\Delta-1)]  & {\Delta \geqslant 1}\\
\gamma_2 ln\Delta - 1  & {1 > \Delta > 0}\\
-\infty & {\Delta = 0}
\end{array} \right.
\end{equation}
where $\Delta = \Delta(x,x_{ji})$ is a distance function between $x$ and $x_{ji}$, $N^{-}$ is the number of negative bags, $n_j$ is the number of instances in bag $X_j$, $L_j=-1$ indicates bag $X_j$ is negative, and $Z,\gamma_1, \gamma_2$ are positive empirical parameters.
\end{definition}

In general, when how likely $x_i$ is negative based on the known negative instances is to be determined, we should consider: 1) the closer a negative instance is for $x_i$, the more influence it has on {$x_i$}'s label. 2) the far negative instances should not show much influence. In other words, $\mathcal{D}$ should be robust to outliers.

\begin{theorem} 
 The influence of far / close negative instances on $\mathcal{D}$ defined in Eq. (\ref{discri-1}) is decreased / increased sharply when the negative instances are farther / closer and the influence of a far negative instance is limited.
\end{theorem}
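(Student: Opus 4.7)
The plan is to analyze the function $d(\Delta)$ regime by regime, since $\mathcal{D}(x)$ is just a $Z$-scaled average of $d(\Delta(x,x_{ji}))$ over the negative instances, so any monotonicity, rate, and boundedness statements at the level of $d$ transfer termwise to $\mathcal{D}$.

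First I would handle the \emph{far} regime $\Delta \geq 1$. Here $d(\Delta) = -\exp[-\gamma_1(\Delta-1)]$, so $d'(\Delta) = \gamma_1 \exp[-\gamma_1(\Delta-1)]$, which is strictly positive and itself decays exponentially in $\Delta$. Thus both the value of $d$ and its rate of change toward zero shrink at an exponential speed as a negative instance moves farther from $x$, which is what is meant by the influence of far negatives being ``decreased sharply.'' The same formula immediately yields the boundedness claim: for $\Delta \geq 1$ we have $|d(\Delta)| = \exp[-\gamma_1(\Delta-1)] \leq 1$, and in the limit $\Delta \to \infty$ the contribution vanishes, so no single far negative can dominate the sum defining $\mathcal{D}(x)$.

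Next I would handle the \emph{close} regime $0 < \Delta < 1$, where $d(\Delta) = \gamma_2 \ln\Delta - 1$ and $d'(\Delta) = \gamma_2/\Delta$. Since $d'(\Delta) \to +\infty$ as $\Delta \to 0^+$, the magnitude of $d$ grows without bound, with rate of growth scaling like $1/\Delta$; the boundary case $\Delta = 0$ is consistently set to $-\infty$. This captures the ``increased sharply'' half of the statement: a small decrease in $\Delta$ near zero produces a large (unbounded, in fact) negative contribution to $d$, and therefore to $\mathcal{D}(x)$. I would also briefly check continuity at $\Delta = 1$ (both pieces give $d(1) = -1$) so that the two regimes glue into a coherent monotone function, which lets the ``sharpness'' comparison between the two sides be made on equal footing.

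Finally, I would lift these pointwise facts to $\mathcal{D}(x)$ itself by linearity: since $\mathcal{D}(x) = (Z\sum_j n_j)^{-1}\sum_{L_j=-1}\sum_i d(\Delta(x,x_{ji}))$, the contribution of a single negative instance $x_{ji}$ to $\mathcal{D}(x)$ is exactly $d(\Delta(x,x_{ji}))$ up to the constant prefactor. Hence the exponential decay, the logarithmic blow-up, and the bound $|d|\leq 1$ for $\Delta \geq 1$ apply to each summand, proving all three assertions of the theorem.

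The main obstacle is that the theorem is stated qualitatively (``sharply,'' ``limited'') rather than with a precise quantitative target, so the real work is choosing the right formalization. I would use the derivatives $d'(\Delta)$ in the two regimes as the canonical measure of ``sharpness'' (exponential in the far regime, $1/\Delta$ in the near regime) and the uniform bound $|d(\Delta)|\leq 1$ for $\Delta\geq 1$ as the precise meaning of ``limited''; everything else is routine calculus.
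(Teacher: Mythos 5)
Your proposal is correct and follows essentially the same route as the paper's proof: a regime-by-regime calculus analysis of $d(\Delta)$ via its first derivative (exponential decay of the far-regime term with range $[-1,0)$ giving the ``limited'' claim, and the $\gamma_2/\Delta$ blow-up near zero giving the ``increased sharply'' claim), with the paper additionally citing the second derivative for concavity where you note the derivative's exponential decay directly. Your extra remarks on continuity at $\Delta=1$ and the termwise transfer to $\mathcal{D}$ are harmless refinements of the same argument.
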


\begin{proof}
For an instance $x_i$ and a negative one $x_j$, it is supposed that $x_j$ is a far negative instance for $x_i$ if $\Delta = \Delta(x_i,x_j) \geqslant 1$; otherwise, $x_j$ is a close one for $x_i$.

If $x_j$ is a far negative instance for $x_i$, i.e., $\Delta \geqslant 1$.
According to the definition of $d(x_i,x_j)$ in Eq. (\ref{discri-2}), the influence of {$x_j$} on $x_i$ is $-exp[-\gamma_1 (\Delta - 1)]$. Its first derivative for $\Delta$ is $\gamma_1 exp[-\gamma_1 (\Delta-1)] > 0$ and second one is $-\gamma_1^2 exp[-\gamma_1(\Delta-1)] < 0$. So $exp[-\gamma_1 (\Delta - 1)]$ is a monotonically increasing and concave function, and its value range is $[-1,0)$. Therefore, {$x_j$}'s influence on $x_i$ will decrease sharply when $\Delta = \Delta(x_i,x_j)$ increases and be limited to $[-1,0)$.

If $x_j$ is a close negative instance for $x_i$, the influence of {$x_j$} on $x_i$ is $\gamma_2 ln\Delta - 1$. Its first derivative is $\gamma_2 \Delta^{-1} > 0$ and second one is $-\gamma_2 \Delta^{-2} < 0$. Therefore, {$x_j$}'s influence on $x_i$ will increase sharply when $\Delta$ decreases.

\end{proof}

\begin{figure}[!t]\vspace{-0.2cm}
\centering
\includegraphics[width=0.8\linewidth, height=5.5cm]{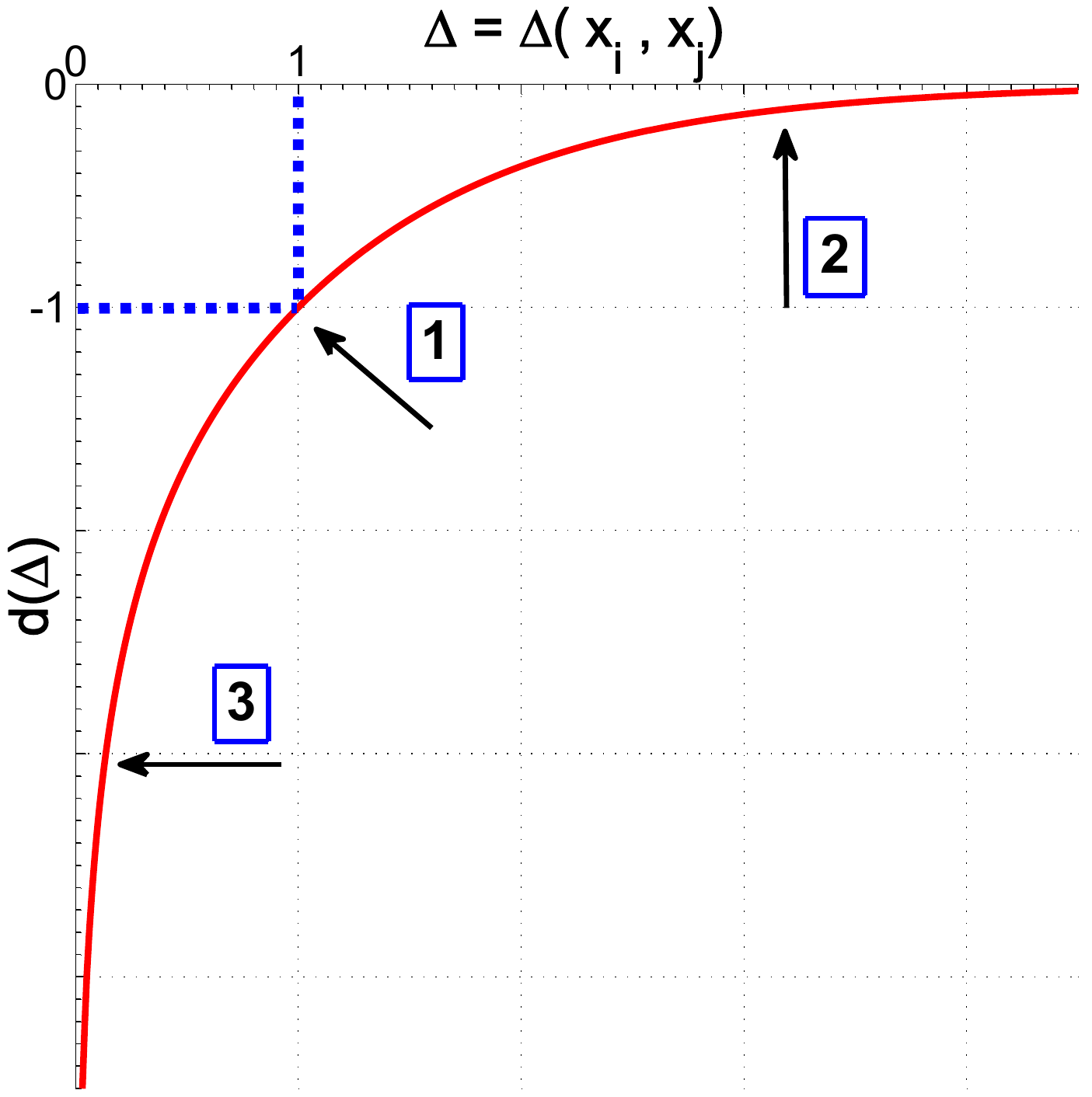} \vspace{-0.1cm}
\caption{The function image of $d(\Delta)$ that is the monotonic increasing function for $\Delta$. Suppose that $x_j$ is a far negative instance for $x_i$ if $\Delta = \Delta(x_i,x_j) > 1$ and $x_j$ is a near negative instance for $x_i$ if $\Delta < 1$. Tag '1' corresponds to the value of $d(\cdot)$ when $\Delta$ is normal. Tag '2' indicates the situation where {$d(\Delta)$}'s increase will slow down sharply when a {$x_i$}'s far negative instance is father and the limit value is 0, which means $d(\Delta)$ is robust to outliers. Tag '3' indicates the situation where $d(\Delta)$ will decrease exponentially (equal to showing a more significant influence) when a {$x_i$}'s near negative instance gets closer, which means we puts exponentially more importance on $x_i$ (more likely to regard it as a negative one) when it is closer to a near negative instances.}\vspace{-0.5cm}
\label{fig:discri}
\end{figure}

As shown in Figure \ref{fig:discri}, $d(\cdot)$ increases along with the increase of $\Delta$. The influence of close negative instance and the far one are dealt with separately. And the farther the negative instance is, the exponentially less it contributes to $d(\cdot)$. According to the definition of $\mathcal{D}$, it consists of many $d(\cdot)$s. Therefore, $\mathcal{D}$ is robust to outliers and puts exponentially more importance on near negative instances if they become closer.

\subsubsection{\textbf{Approximation of Objective Based On CSDG}}\label{pmsub2:sub2}

Different instances in PCP come from different positive bags. Finding the best labeling $\mathcal{L}$ directly for Eq. (\ref{goal-1}) is a difficult combinational optimization problem. In this section, we approximate the original goal in Eq. (\ref{goal-1}) by maximizing the total ranking score of instances in PCP based on consistent similarity and discrimination graph ($CSDG$).

\begin{definition}{\emph{(Consistent Similarity and Discrimination Graph ($CSDG$))}}\label{CSDG}
  $CSDG = (V,E)$ is an undirected weighted graph where the vertex $v_i \in V$ corresponds to the instance $x_{wb_i}$ in PCP and the edge between $v_i$ and $v_j$ is $e_{ij} \in E$ on the condition that $\mathcal{S}(x_i, x_j)>0$ and $i \neq j$. The weight for $e_{ij}$ is $E(v_{x_i},v_{x_j}) = max\{0, \mathcal{S}(x_i, x_j) + \alpha\mathcal{C}(x_i, x_j) + \beta\mathcal{D}(x_i, x_j)\}$, where $\mathcal{D}(x_i, x_j) = min\{\mathcal{D}(x_i), \mathcal{D}(x_j)\}$, and $\alpha$, $\beta$ are two balance factors.
\end{definition}

When an instance $x$ in PCP is likely to be negative, the importance of the role it plays in $CSDG$ should be undermined, i.e., decreasing the weight of edges containing $x$. This is because instances in PCP are considered as positive ones and the edges in $CSDG$ correspond to the similarity between positive instances. $\mathcal{S}(x_i, x_j)$ and $\mathcal{C}(x_i, x_j)$ represent the similarity between two instances from the global structure. $\mathcal{D}(x_i, x_j) = max\{\mathcal{D}(x_i), \mathcal{D}(x_j)\}$ indicates that the similarity should be decreased if one of two vertexes edge $e_{ij}$ contains is likely to be negative.

We approximate the optimization problem in Eq. (\ref{goal-1}) as a combination problem to maximize $E(v_{x_i},v_{x_j})$ for $CSDG$ formulated as:
\begin{equation}
\small
\label{goal-2}
\begin{split}
&\max_{\mathcal{V}} \sum_{(v_{x_i},v_{x_j}) \in CSDG}  E(v_{x_i},v_{x_j})\\
&s.t. ~\sum_{(v_{x_p},v_{x_q})} E(v_{x_p},v_{x_q}) \geqslant \sum_{(v_{x_p},v_{x_k})} E(v_{x_p},v_{x_k})\\
&~~~~~(v_{x_p},v_{x_q}) \in CSDG, \forall x_k \in (X^W \setminus  X^{CSDG})
\end{split}
\end{equation}
where $v_x$ is the vertex in $CSDG$ and corresponds to instance $x$ in PCP, $X^{W}$ is the $\mathcal{WS}s$ of all $\mathcal{WB}s$, $\mathcal{V}$ is the corresponding vertexes for $X^{W}$, and $X^{CSDG}$ represents the instances that all vertexes of $CSDG$ correspond to.

\subsubsection{\textbf{Instance Updating Strategy}}\label{pmsub2:sub3}

The intuitive way to figure out the problem in Eq. (\ref{goal-2}) is to replace the vertexes in $CSDG$ with the vertexes corresponding to the rest instances in $X^W \setminus  X^{CSDG}$ iteratively until the maximal is reached. However, it is hard and time consuming.
An approximate way is to rank vertexes in $CSDG$ and regard the vertex with the lowest ranking score as the one needed to be replaced. Then we search the most suitable substitute instance for the replaced.

\textbf{Ranking Instance in PCP:}

We propose a random walk algorithm, summarized in Algorithm \ref{alg:cal_rank}, based on PageRank \cite{page1999pagerank} to perform on $CSDG$ to rank vertexes. The intuition is that vertexes that are connected to high ranking vertexes by high weighted edges should have high ranking scores. Higher the {vertex}'s ranking score is, more positive the {vertex}'s corresponding instance is considered to be. This is because the {edge}'s weight combines the similarity among positive instances and the discrimination from negative instances. So a vertex is considered to be positive with higher probability if it is connected to more high ranking vertexes by high weighted edges shown in Figure \ref{fig:pagerank}.

\begin{algorithm}[!t]
\begin{small}
\caption{\ugg{CRS: Calculate Ranking Score}}\label{alg:cal_rank}
\begin{algorithmic}[1]
\REQUIRE ~~\\
    $CSDG = (V,E)$: Consistent similarity and discrimination graph defined in Definition \ref{CSDG} ;\\
    $d$: A damping factor;\\
    $n_{max\_ite}$: The maximal iterative number;\\
\ENSURE ~~\\
     $\mathcal{R} = {(r_{v_1}, \cdots, r_{v_M})}'$: The ranking scores for all vertexes in $CSDG$;\\

\STATE $n_{ite} \leftarrow 0$;\\
\STATE $\mathcal{R}_0$: Initialized randomly;\\
\STATE $\Upsilon_{i,i} \leftarrow w_i^T \cdot x_i$: Calculate the confidence value for each vertex;\\

\FORALL{$(i,j)\in \{ (p,q) \ | \ p,q \in (1, \cdots, M)\}$}
\STATE \textbf{if} $e_{ij}$ exists, \textbf{then}
\STATE ~~~~$E(i,j) \leftarrow$ $E(v_{x_i},v_{x_j})$
\STATE \textbf{else} $E(i,j) \leftarrow$ 0
\STATE Normalize $E(i,j)$
\ENDFOR

\WHILE{$n_{ite} \leqslant n_{max\_ite}$}
    \STATE $R_{n_{ite}+1} \leftarrow (1-d)[\Upsilon]_{(M \times 1)}+d[E]_{(M \times M)} \cdot R_{n_{ite}}$;
    \STATE $n_{ite} \leftarrow n_{ite} + 1$
\ENDWHILE
\RETURN $CSDG$;

\end{algorithmic}
\end{small}
\end{algorithm}

The vertex with the lowest ranking score is chosen to replace in each iteration phase of $CSDG$. The ranking score of each vertex is calculated iteratively by the following iteration equation:
\begin{equation}
\label{pagerank_iter}
\small
\mathcal{R}_{k+1} = (1 - d)
\begin{bmatrix}
\Upsilon_{1,1}\\
\vdots\\
\Upsilon_{M,M}
\end{bmatrix}
+d
\begin{bmatrix}
E(1,1) & \cdots & E(1,M)\\
\vdots & \ddots & \vdots\\
E(M,1) & \cdots & E(M,M)
\end{bmatrix}
\mathcal{R}_{k}
\end{equation}
where $M$ is the number of vertexes in $CSDG$, $k$ indicates the $k$th phase, $d$ is a damping factor, $\mathcal{R}_{k} = {(r_{v_1}, \cdots, r_{v_M})_k}'$ represents the ranking score of each vertex in the $k$th phase, $\Upsilon_{i,i}$ represents the self confidence value of instance $x_i$ in the process of calculating $similarity$, $E(i,j)$ is the normalized $E(v_{x_i},v_{x_j})$ and equals $E(j,i)$, and $E(i,j)$ equals $0$ if there is no edge between vertexes $v_i$ and $v_j$.

It is noteworthy that:
1) A {vertex}'s ranking score $r_{v_k}$ is determined by its confidence value $\Upsilon_{k,k}$ and the ranking scores of its adjacent vertexes. $\Upsilon_{k,k}$ represents the probability of instance $x_k$ to be classified as positive to a certain degree. The influence of its adjacent {vertexes}' ranking scores is transmitted by $E(i,j)$ which can capture the difference in relationship among vertexes.
2) The random walk algorithm is practicable. $CSDG$ is regarded as a bidirectional weighted graph without circles because the weight of edge $E(v_{x_i},v_{x_j})$ is symmetrical and there is no vertex connecting itself.
${\mathcal{R}}_{k}$ is initialized randomly. The iteration process will stop when it meets the maximal iteration number. After all vertexes get the ranking scores, the corresponding instance of the vertex with the least score will be regarded as the least positive instance.

\begin{figure}[!t]\vspace{-0.2cm}
\centering
\includegraphics[width=0.8\linewidth, height=6cm]{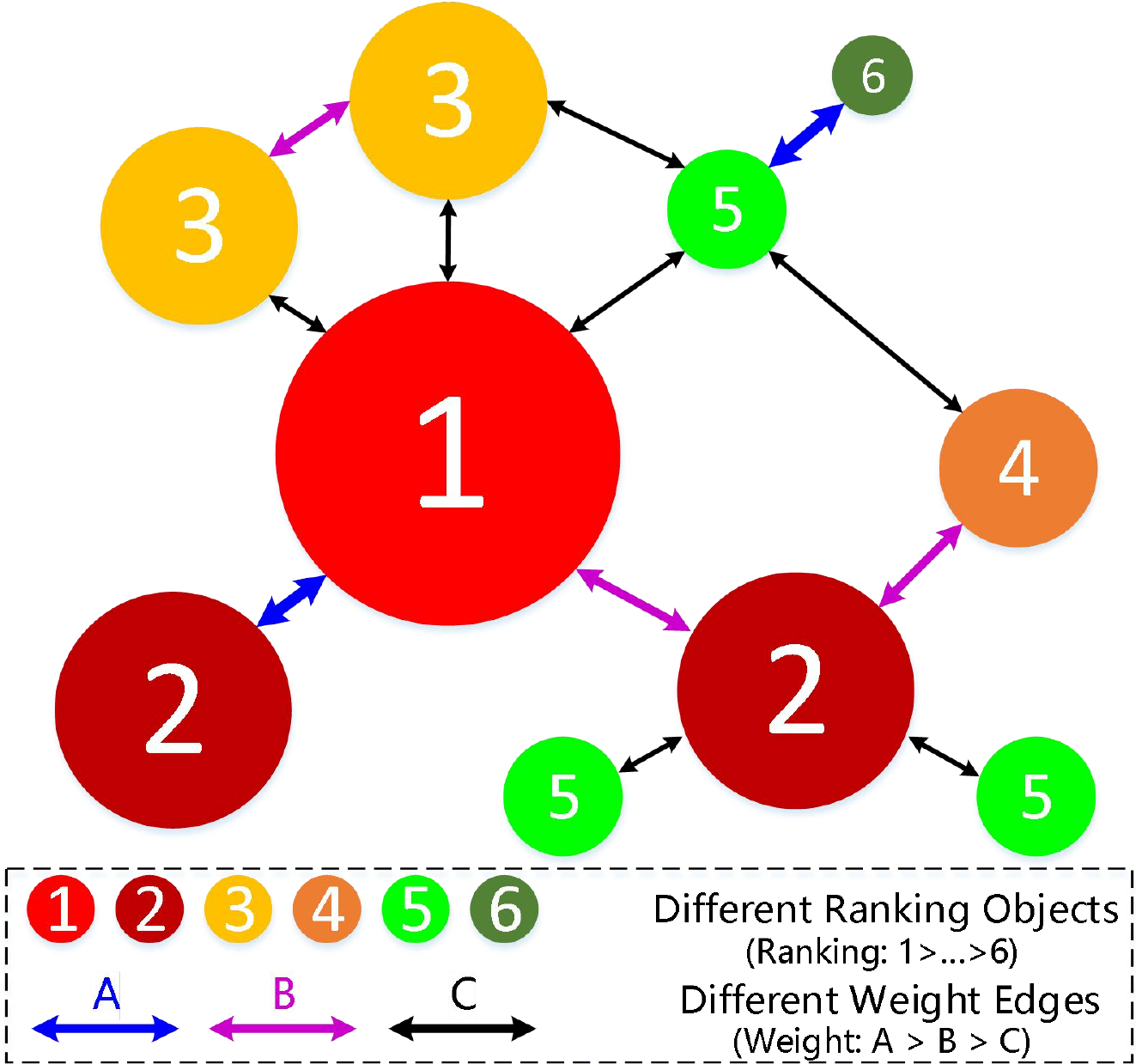} \vspace{-0.1cm}
\caption{Ten vertexes with different ranking scores (labelled with different numbers and colors) are connected by different weighted edges (labelled with different colors: A, B, and C) after the random walk algorithm employed. Higher the ranking score is, more positive the vertex is considered to be. The red circle labelled with '1' is considered to play the most important role and be the most positive one in the network because it is connected with the largest number of vertexes by relatively high weight edges. The green circle on top labelled with '5' and the dark red one on the bottom right labelled with '2' are connected to the same number of vertexes while possess different ranking scores because of different weighted edges.}\vspace{-0.5cm}
\label{fig:pagerank}
\end{figure}

\textbf{Instance Updating:}

After the least positive instance $x_t$ is discriminated, it needs to be replaced with a new one from $X^W \setminus  X^{CSDG}$.
The whole instance updating strategy is summarized in Algorithm \ref{alg:updateins}.
At first, it is intuitive to find a new one from {$x_t$}'s corresponding $\mathcal{WS}_{T}$ because there is at least one positive instance in each positive bag and should exist a more positive one in $\mathcal{WS}_{T}$ if $x_t$ is not positive enough.
Therefore, we replace $x_t$ with each instance in $\mathcal{WS}_{T}$ respectively to calculate the sum of all {vertexes}' ranking scores in $CSDG$.
There are two cases:

\textbf{\romannumeral1)} We can not find an instance in $\mathcal{WS}_{T}$ making the total score higher than that by $x_t$.
In this case, we update another vertex in $CSDG$. Specifically, {$x_t$}'s corresponding vertex in $CSDG$ is denoted as $v_t$ and the vertexes in $CSDG$ are sorted in increasing order according to the ranking score. We choose the vertex just after $v_t$ to replace. As a result, it returns to the beginning of the updating strategy. Notably, if $v_t$ is at the last of the order, the updating process will be terminated.

\textbf{\romannumeral2)} We can find an instance $x_{t'}$ in $\mathcal{WS}_{T}$ making the total score higher than that by $x_t$.
In this case, $x_{t'}$ is selected as the substitute instance for $x_{t}$. Specifically, {$x_{t'}$}'s corresponding vertex is denoted as $v_{t'}$. We rank $v_{t'}$ with the rest vertexes in $CSDG$ based on Eq. (\ref{pagerank_iter}) in increasing order. After ranked:
A) If $v_{t'}$ is at the first in the order, we choose the vertex at the second of the order to replace. Then, it returns to the beginning of the updating strategy.
B) If $v_{t'}$ is not at the first in the order, we choose the first vertex to replace. Then, it returns to the beginning of the updating strategy.
A actual updating corresponds to the process that $v_t$ is replaced by $v_{t'}$. The instance updating process will also be terminated if it reaches the maximal actual updating number.

\begin{algorithm}[!t]
\begin{small}
\caption{\ugg{IUS: Instance Updating Strategy}}\label{alg:updateins}
\begin{algorithmic}[1]
\REQUIRE ~~\\
    $CSDG$: Initialized;\\
    $n_{max\_upd}$: The maximal updating number;\\
\ENSURE ~~\\
     $CSDG$: Updated;\\

\STATE $\mathcal{R} = {(r_{v_1}, \cdots, r_{v_M})}'\leftarrow$ Invoke \textbf{CRS} (\emph{Algorithm }\ref{alg:cal_rank});\\
\STATE $v_t$: The vertex in $CSDG$ with the lowest ranking score;\\
\STATE $n_{update} \leftarrow 0$;\\
\WHILE{$n_{update} \leqslant n_{max\_upd}$}
    \STATE  $x_t$: The corresponding instance of $v_t$;
    \STATE  $x_{t'}: x_{t'} \in \mathcal{WS}_{T}$ and $v_{t'}$ corresponding to $x_{t'}$;

    \STATE \textbf{if} {$\not\exists x_{t'}$ s.t.} $(\sum_{i=1}^{M}  r_{v_i})_{(v_{t'})} > (\sum_{i=1}^{M}  r_{v_i})_{(v_{t})}$, \textbf{then}
        \STATE ~~~~$\mathcal{R}_{(v_t)}\leftarrow$ Invoke \textbf{CRS} (\emph{Algorithm }\ref{alg:cal_rank});
        \STATE ~~~~Sort vertexes in increasing order according to $\mathcal{R}_{(v_t)}$;
        \STATE ~~~~\textbf{if} $v_t$ is at the last in the order, \textbf{then}
        \STATE ~~~~~~~~\textbf{return} $CSDG$;
        \STATE ~~~~\textbf{else} $v_t \leftarrow v_{t_+}$: $v_{t_+}$ is just after $v_t$ in the order;

    \STATE \textbf{else}
        \STATE ~~~~Replace $v_t$ with $v_{t'}$ in $CSDG$;
        \STATE ~~~~$\mathcal{R}_{(v_{t'})} \leftarrow$ Invoke \textbf{CRS} (\emph{Algorithm }\ref{alg:cal_rank});
        \STATE ~~~~Sort vertexes in increasing order according to $\mathcal{R}_{(v_{t'})}$;
        \STATE ~~~~\textbf{if} {$v_{t'}$ is at the first in the order}, \textbf{then}
        \STATE ~~~~~~~~$v_t \leftarrow v_{t_{2nd}}$: $v_{t_{2nd}}$ is the second in the order;
        \STATE ~~~~\textbf{else} $v_t \leftarrow v_{t_{1st}}$: $v_{t_{1st}}$ is the first in the order;
        \STATE ~~~~Replace $v_{t'}$ with $v_t$ in $CSDG$;
        \STATE ~~~~$n_{update} \leftarrow n_{update}+1$;
\ENDWHILE
\RETURN $CSDG$;

\end{algorithmic}
\end{small}
\end{algorithm}

\subsection{Bag Classification}\label{pm:sub3}

The bag classification scheme is proposed based on the instances in updated PCP, denoted as $T^+$.
The basic idea is to embed bags into a feature space based on $T^+$ and utilize the distance between a bag and each instance in $T^+$ to represent the bag.
For bag $X_t$, the feature representation vector is:
\begin{equation}
\label{fea-bag}
Z_t = [w(X_t,x_1^{+}), w(X_t,x_2^{+}), \cdots, w(X_t,x_M^{+})]'
\end{equation}
where $x_i^{+} \in T^+$, $M$ is the number of instances in PCP, and $w(X_t,x_i^{+})$ is the distance between $X_t$ and $x_i^{+}$ based on Hausdorff distance metric as:
\begin{equation}
\label{dis-bagins}
w(X_t,x_i^{+}) = \max_{x_{tj} \in X_t} exp(-\gamma_{d} \| x_{tj} - x_i^{+} \|^2)
\end{equation}
where $\gamma_{d}$ is an empirical parameter. According to the definition of feature vector, a {bag}'s label is determined by its nearest instance to $T^+$, which means the bag is labelled positive if one of its instances is similar to any one in $T^+$. $w(X_t,x_i^{+})$ also satisfies the basic assumption of MIL that there is at least one positive instance in positive bag.

In the end, the MIL setting is transformed into the standard single instance learning problem where a classifier is trained by these vectors and their labels. A SVM classifier is employed and a new bag is classified as:
\begin{equation}
\label{decifun-bag}
L_t = sgn(G_{bag}(Z_t))
\end{equation}
where $G_{bag}(\cdot)$ is the learned decision function.
The whole algorithm procedure of PIGMIL is summarized in Algorithm \ref{alg:pigmil}.

\begin{algorithm}[!t]
\begin{small}
\caption{\ugg{PIGMIL: Positive instance detection via graph updating for multiple instance learning}}\label{alg:pigmil}
\begin{algorithmic}[1]
\REQUIRE ~~\\
    Training Set: $TR = \{((X_1^{TR}, L_1^{TR}), \cdots, ((X_{N_{TR}}^{TR}, L_{N_{TR}}^{TR}))\} \in \mathcal{X}^{d} \times \{+1,-1\}$;\\
    Test Set: $TE = \{X_1^{TE}, \cdots, X_{N_{TE}}^{TE}\} \in \mathcal{X}^{d}$;\\
    where $\mathcal{X}$ is the instance space;\\
\ENSURE ~~\\
    The labels of Test Set $\{L_1^{TE}, \cdots, L_{N_{TE}}^{TE}\} \in \{+1,-1\}$;\\
\item[] \textbf{// Initialization (Section \ref{pm:sub1}):}
\STATE $\mathcal{WS}_j \leftarrow$ $\{(x_{jk_1}, \cdots, x_{jk_{n_{ws}}}) \ | \ f_{KDE_{min}}(x_{ji}) \leqslant T_{ws_j}, x_{ji} \in X_j^{TR}, L_j^{TR} = +1 \}$, where $n_{ws}$ is the size of $\mathcal{WS}_j$;\\
\STATE $\mathcal{WB}_j \leftarrow$ $\{ X_j^{TR} | t_j \geqslant h_{T_{test}} \}$, where $t_j$ is the t-value for $X_j^{TR}$ and $h_{T_{test}}$ is a threshold value;\\
\STATE PCP $\leftarrow$ $\{ (x_{p_1}^*, \cdots, x_{p_{n_{wb}}}^*) \ | \ x_{p_j}^*$ = $\arg\min\limits_{x \in \mathcal{WS}_{p_j}}$ $f_{KDE_{min}}(x) \}$, where $X_{P_j}^{TR}$ is a $\mathcal{WB}$ and $\mathcal{WS}_{p_j}$ is its $working \ set$;\\
\item[] \textbf{// PCP Updating (Section \ref{pm:sub2}):}
\STATE Initialized $CSDG$ $\leftarrow$ Construct $CSDG$ based on PCP: $\mathcal{S}(x_i, x_j)$, $\mathcal{C}(x_i, x_j)$, and $\mathcal{D}(x_i, x_j)$;
\STATE Updated $SCDG$ (updated PCP) $\leftarrow$ Invoke $\textbf{IUS}$ $(\emph{Algorithm}$ \ref{alg:updateins});
\item[] \textbf{// Bag Classification (Section \ref{pm:sub3}):}
\STATE $L_t^{TE}$ $\leftarrow$ $sgn(G_{bag}(Z_t^{TE}))$: transform $X_t^{TR}$, $X_t^{TE}$ into $Z_t^{TR}$, $Z_t^{TE}$ respectively, and employ a SVM classifier learned by $(Z_t^{TR}, L_t^{TR})$ to classify $Z_t^{TE}$;
\RETURN $L_t^{TE}$;

\end{algorithmic}
\end{small}
\end{algorithm}

\section{Experiments}\label{ex}

\subsection{Data Sets, Baseline Methods, and Experimental Settings}\label{ex:sub1}

Three synthetic MIL data sets (BASIC, RHOMBUS, RING shown in Figure \ref{fig:synthetic}(a)- \ref{fig:synthetic}(c) separatively) are constructed to verify {PIGMIL}'s ability to detect TPIs. Each data set contains 20 positive and 20 negative bags, each positive bag contains 4 positive and 4 negative instances, and each negative bag contains 8 negative instances. Negative and positive instances are generated from uniform distribution and normal distribution respectively.
Three kinds of real-world MIL data sets are utilized to verify {PIGMIL}'s classification accuracy compared to classic MIL methods:
1) Musk-1 and Musk-2 \cite{andrews2002support}.
2) Elephant, Fox and Tiger \cite{andrews2002support}.
3) UCSB Breast \cite{kandemir2014empowering} belongs to image classification and is used in tissue microarray (TMA) based diagnosis in malignant breast cancer. Each image (bag) is split into equal-sized grid (instance) and its goal is to determine an image as benign or malignant.

To evaluate {PIGMIL}'s ability to detect TPIs on synthetic data sets and its performance on real-world data sets, some baseline methods are implemented:

\vspace{0.1cm}\noindent\textit{\uppercase\expandafter{\romannumeral1}: The TPI based methods}

\begin{itemize}
\small
    \item[1.] $APR$: The first method designed for MIL problem constructs a rectangle that is parallel with axis and tries to cover positive instances as many as possible \cite{dietterich1997solving}.
    \item[2.] $DD$: Recognize instances with the highest DD value, and regard these instance as TPIs \cite{zhang2001dd}.
    \item[3.] $MILD$: Utilize the ambiguous information of instances in the positive bags to distinguish the true positive instances with two feature representation \cite{li2010mild}.
    \item[4.] $mi$-$Sim$: Learn the similarity between instances in positive bags combined with the {similarity}'s consistency \cite{rastegari2015discriminative}.
    \item[5.] $KDE$$_{min}$: (For TPI detection) The instance with the lowest $f_{KDE_{min}}(x)$ (defined in Equation (\ref{KDEmin})) of each positive bag makes up TPIs.
    \item[6.] $KDE$\footnote[1]{\tgg{$f_{KDE}(x) = {(Z N^-)}^{-1} \sum_{L_j=-1} \sum_{x_{ji} \in X_j} exp(-\gamma \|x - x_{ji}\|)$}}: (For TPI detection) The instance with the lowest $f_{KDE}(x)$ of each positive bag makes up TPIs \cite{duda2012pattern}.
    \item[7.] $KDE$$_{max}$\footnote[2]{\tgg{$f_{KDE_{max}}(x) = {(Z N^-)}^{-1} \sum_{L_j=-1} \max_{x_{ji} \in X_j} exp(-\gamma \|x - x_{ji}\|)$}}: (For TPI detection) The instance with the lowest $f_{KDE_{max}}(x)$ of each positive bag makes up TPIs.
\end{itemize}

\vspace{0.1cm}\noindent\textit{\uppercase\expandafter{\romannumeral2}: The non-TPI based methods}
\begin{itemize}
\small
    \item[1.] $Citation \ kNN$: Apply k-nearest neighbor method into MIL and define bag-level distance between bags based on the minimum Hausdorff distance \cite{wang2000solving}.
    \item[2.] $MI$-$Kernel$: Apply the set kernel method to bags represented by sets of feature vectors \cite{gartner2002multi}.
    \item[3.] $MILES$: Try to discriminate target instances and measure similarity between bags according to their closeness to target instances \cite{chen2006miles}.
    \item[4.] $miGraph$: Suppose that instances in a bag are non-iid and takes advantage of graph kernel~\cite{zhou2009multi}.
    \item[5.] $Clustering$ $MIL$: Construct a 'concept' (a spherical area) by clustering all positive instances and instances located in the concept are labelled as positive \cite{tax2010detection}.
    \item[6.] $MInD$(Hausdorff): A MIL framework that takes the Hausdorff distance to measure difference between bags \cite{cheplygina2015multiple}.
\end{itemize}


\vspace{0.1cm}

All reported results are based on 5 times 10-fold cross-validation. All data features are normalized so that each feature shares zero mean and unit variance.
A linear kernel SVM is selected as the classifier.
$\alpha$ and $\beta$ (in Equation \ref{goal-1}) are set to be $max(10, log\frac{C(x_i,x_j)}{S(x_i,x_j)})$ and $max(10,log\frac{\alpha \cdot S + C}{D})$.
We set the size of $\mathcal{WS}$ as 40\% of a positive bag, i.e., $T_{ws_{j}}$ is set to be the least fortieth $ws(\cdot)$.
The quantile when selecting $\mathcal{WB}$ is set to be 1.5 which represents the 90\% confidence level
$\gamma_1$, $\gamma_2$ (Equation (\ref{discri-1})), $Z$ (Equation (\ref{discri-2})), and $\gamma_d$ (Equation (\ref{dis-bagins})) are set to be 1.
$d$ (Equation (\ref{pagerank_iter})) is set to be 0.8.
$n_{max\_ite}$ (Algorithm \ref{alg:cal_rank}) and $n_{max\_upd}$ (Algorithm \ref{alg:updateins}) are 10 and 20 respectively.
Moreover, all methods are executed on an Intel Core 2 Duo CPU (2.10GHz) PC.

Because the specific label of individual instance in positive bags for real-world data sets is unknown while known for the synthetic ones, we test {PIGMIL}'s ability of TPI detection on the synthetic data sets and compare it with those of some baseline methods.

\subsection{TPI Detection Comparison on Synthetic Data Sets}\label{ex:sub2}

\begin{figure*}[!t]
 \centering
 \subfigure[BASIC]{\includegraphics[scale =0.25]{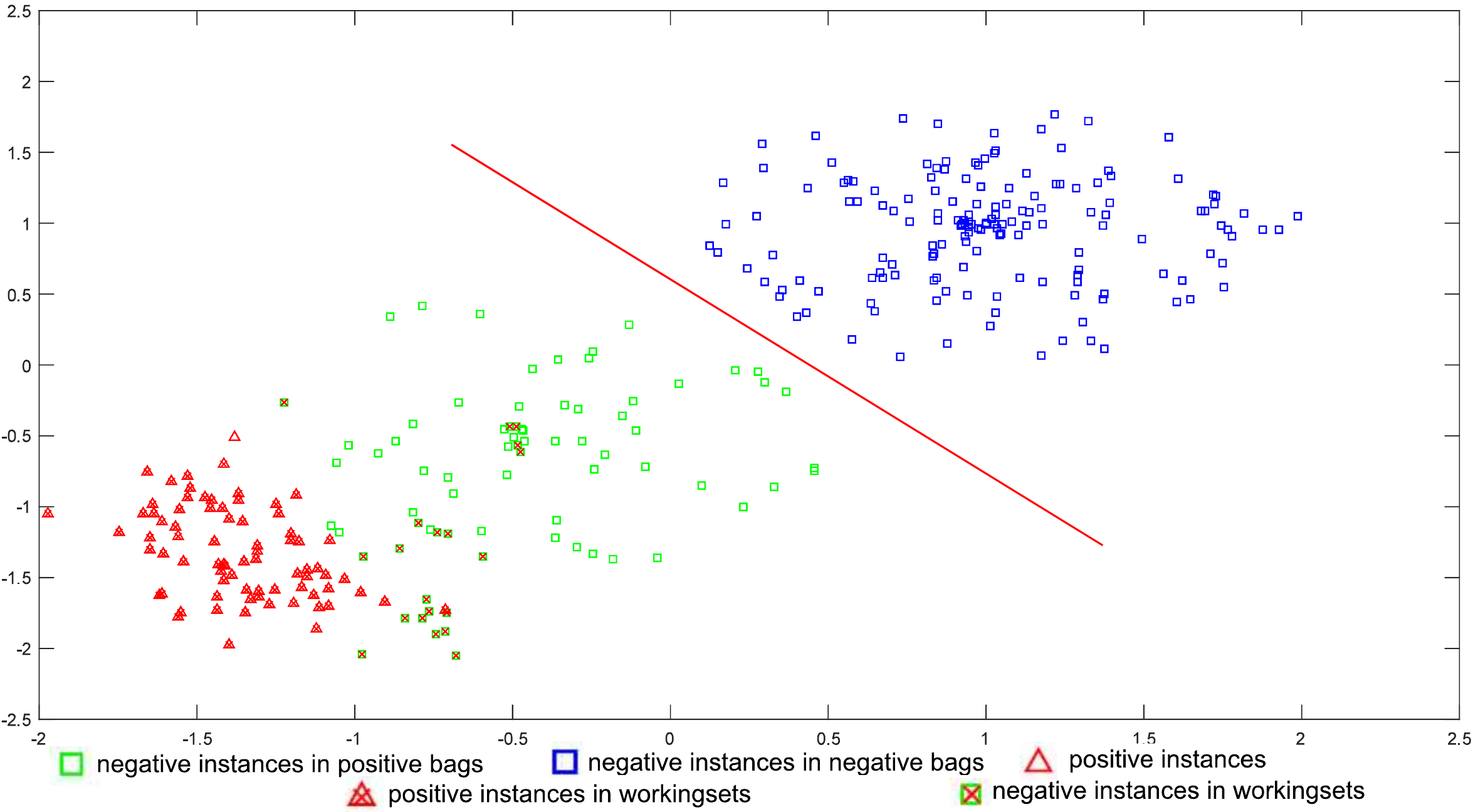}}~\vspace{0.1cm}
 \subfigure[RHOMBUS]{\includegraphics[scale =0.25]{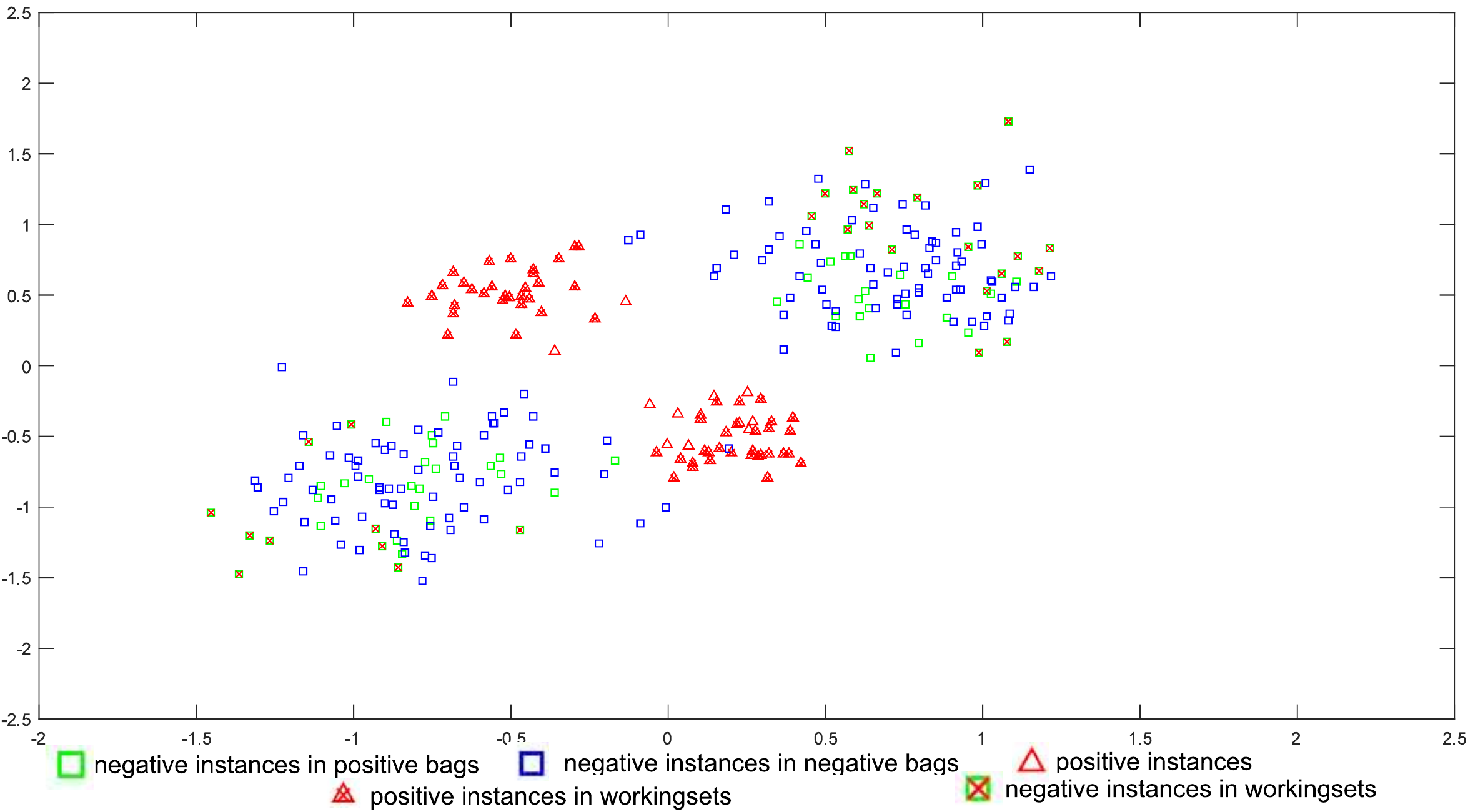}}~\vspace{-0.2cm}
 \subfigure[RING]{\includegraphics[scale =0.25]{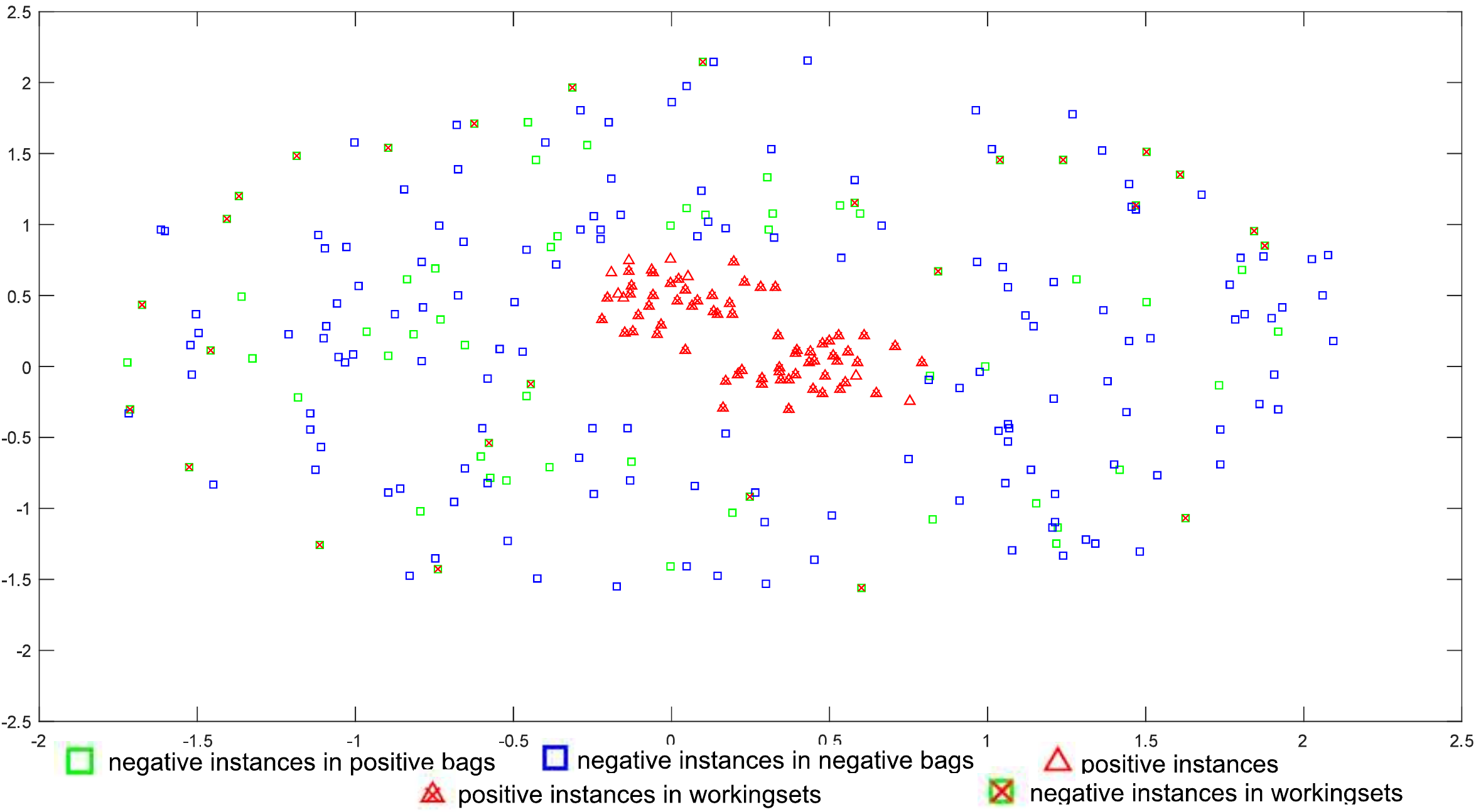}}~\vspace{-0.2cm}
 \caption{Three synthetic MIL data sets:
 1) BASIC is linearly separable and the negative instances in negative bags arise from the same uniform distribution. Negative and positive instances in positive bags arise from another uniform distribution and a normal distribution respectively.
 2) RHOMBUS is linearly inseparable. The positive and negative instances arise from two normal distributions and two uniform distributions respectively. Its negative instances in positive bags are randomly selected from the negative instance set.
 3) RING is linearly inseparable. Positive instances arise from a normal distribution located at the center, and negative ones arise from a uniform distribution located at the area between two concentric circles.
 Instances labelled with red '$\times$' construct $working \ {set}s$ ($\mathcal{WS}s$).
 The size of $\mathcal{WS}$ is 60\% of a positive {bag}'s size. The critical value of $t$ involving in selecting $working \ bags$ ($\mathcal{WB}s$) is set to be 1.5.}
 \label{fig:synthetic}\vspace{-0.3cm}
\end{figure*}

\begin{figure*}[!t]
 \centering
 \subfigure[Result on BASIC]{\includegraphics[scale =0.25]{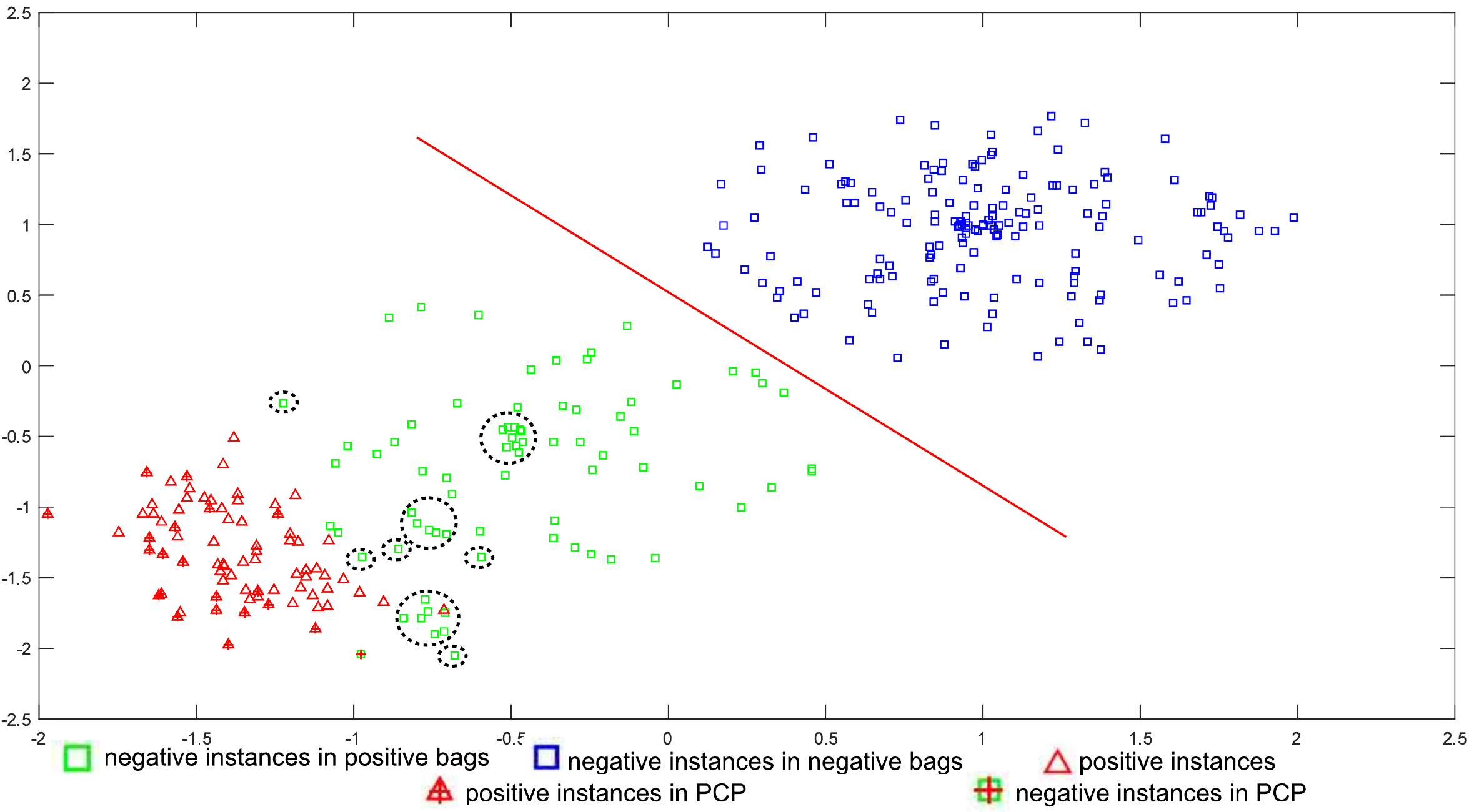}}~\vspace{0.1cm}
 \subfigure[Result on RHOMBUS]{\includegraphics[scale =0.25]{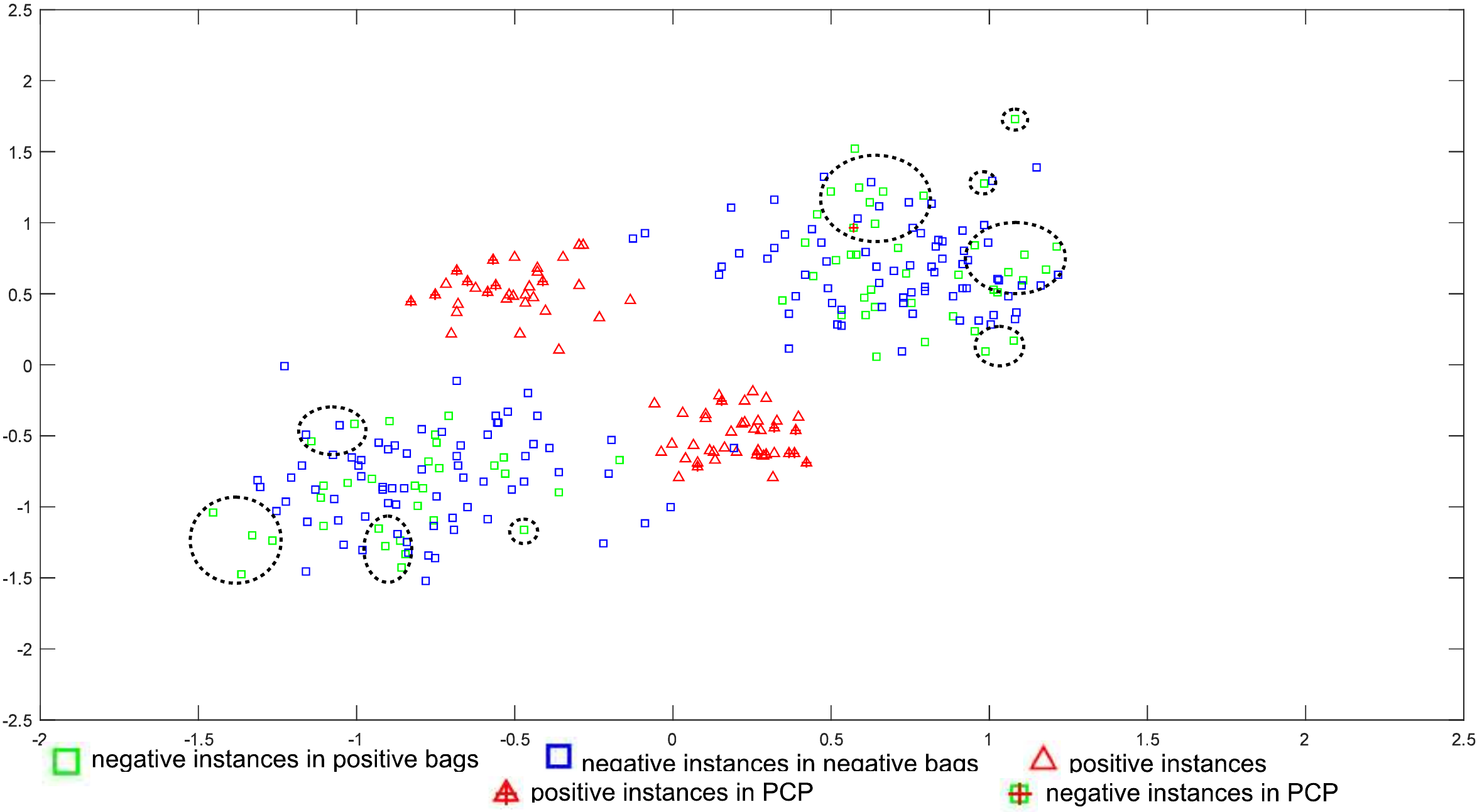}}~\vspace{-0.2cm}
 \subfigure[Result on RING]{\includegraphics[scale =0.25]{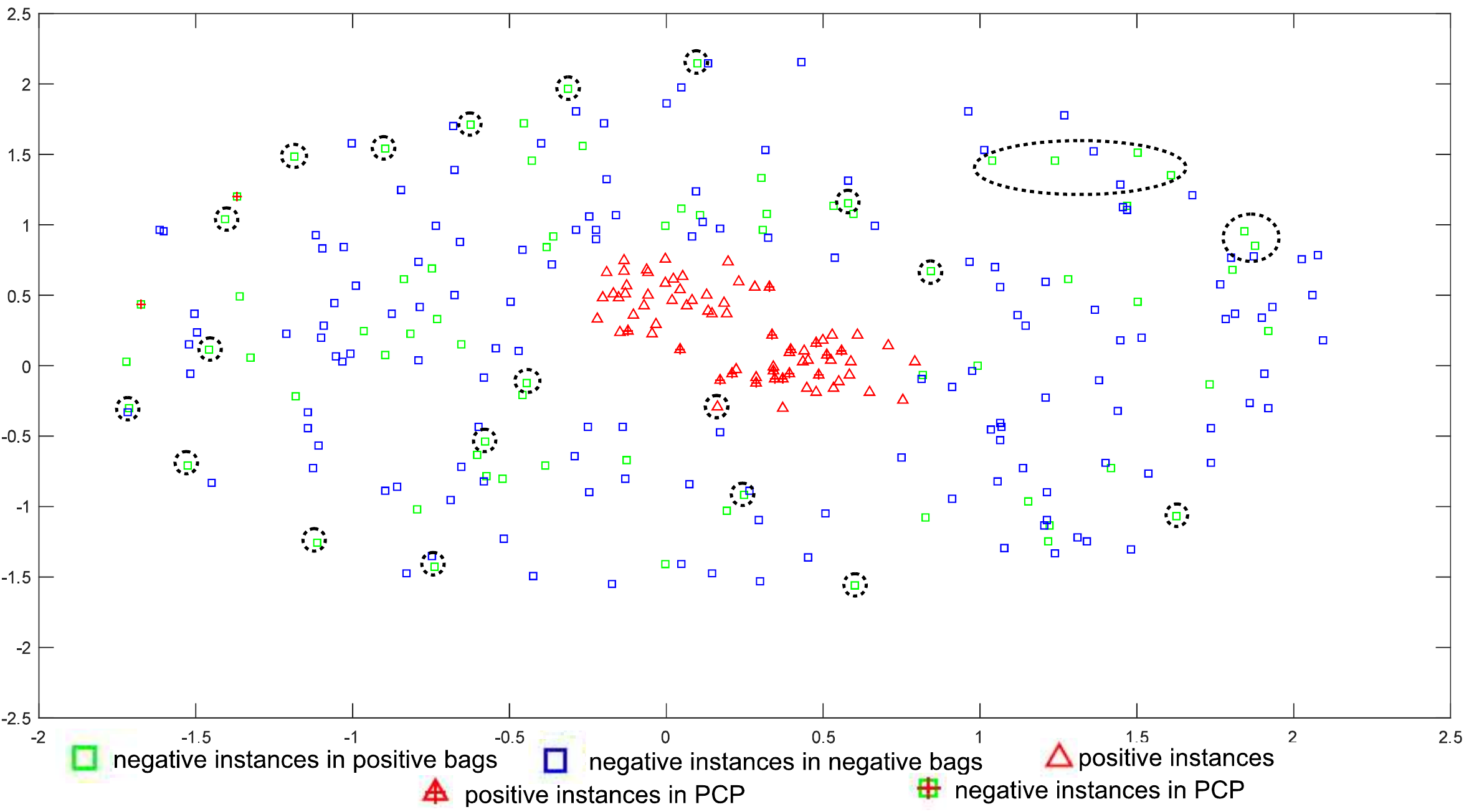}}~\vspace{-0.2cm}
 \caption{{PIGMIL}'s detection of TPIs on \textit{BASIC}, \textit{RHOMBUS}, and \textit{RING}.
 Instances labelled with red '+' construct PCP.
 Black dashed circles correspond to the negative instances that are not in PCP while were in $\mathcal{WS}$s (shown in Figure \ref{fig:synthetic}).}
 \label{fig:syn_TPI}\vspace{-0.3cm}
\end{figure*}

Figure \ref{fig:synthetic} presents the detail of BASIC, RHOMBUS, and RING.
Figure \ref{fig:syn_TPI} reports the performance of {PIGMIL]}'s detection of TPIs on the data sets.
According to Figures \ref{fig:synthetic} (a), (b), and (c), $\mathcal{WS}s$ contained most positive instances in positive bags, which verified the validity of $\mathcal{WS}$s.
According to Figures \ref{fig:syn_TPI} (a), (b), and (c), there were many negative instances (encompassed by black dashed circles) in PCP but had been not in $\mathcal{WS}s$ of Figure \ref{fig:synthetic}. The ratio of positive instances to negative ones in PCP was higher than that in $\mathcal{WS}s$ of Figure \ref{fig:synthetic}, which demonstrated {PIGMIL}'s great ability to detect TPIs.

Table \ref{tab:TPIacc_com} reports the comparison of TPI detection for PIGMIL, DD, MILD, mi-Sim, KDE$_{min}$, KDE, and KDE$_{max}$.
Overall, PIGMIL achieved the best performance on the data sets, which demonstrated {PIGMIL}'s flexibility to different shape of data sets.
DD and MILD showed a low performance on BASIC mainly because positive and negative instances are close.
The bad performance of KDE and KDE$_{max}$ on RING mainly because TPIs are too centralized while negative instances are too dispersive.

\begin{table}[!t]
\addtolength{\tabcolsep}{-5pt}
\renewcommand{\arraystretch}{1.25}
\centering
\caption{Comparison of TPI detection accuracy (\%) with the average one on BASIC, RHOMBUS, and RING. The highest accuracy for each data set is in bold.}\vspace{0cm}
\label{tab:TPIacc_com}
\begin{tabular}{cccccccc}
\toprule
\backslashbox{Data Set}{Method}     &        PIGMIL     &    DD  &   MILD  & mi-Sim    &  KDE$_{min}$  & KDE       & KDE$_{max}$ \\
\midrule
BASIC               &   \textbf{95.0}   &	30.0  &	0.0    & 60.0      &  80.0         &   80.0    & 80.0  \\
RHOMBUS             &   \textbf{95.0}   &	\textbf{95.0} &	\textbf{95.0}   & 75.0      &  85.0         &  90.0     & \textbf{95.0}  \\
RING                &   \textbf{100}    &	95.0  &	\textbf{100}    & 95.0      &  \textbf{100}          &   0.0     & 0.0     \\
\midrule
Average Accuracy    &   \textbf{96.7}    &	73.3  &	65    & 76.7      & 88.3          &   56.7     & 58.3   \\
\bottomrule
\end{tabular}
\end{table}

\subsection{Accuracy Comparison on real-world data sets}\label{ex:sub3}

\begin{table*}[!t]
\scriptsize
\addtolength{\tabcolsep}{-3pt}
\renewcommand{\arraystretch}{1.5}
\centering
\caption{Comparison of TPI detection accuracy (\%) with the average one on BASIC, RHOMBUS, and RING. The highest accuracy for each data set is in bold.}\vspace{0cm}
\label{tab:bagacc_com}
\begin{tabular}{ccccccccccc}
\toprule
\backslashbox{Data Set}{Method} & PIGMIL & APR         & MILD          & mi-Sim        & Citation kNN     & MI-Kernel      & MILES          &   miGraph       & Clustering MIL & MInD(Hausdorff)\\
\midrule
Musk-1      &\textbf{83.4 $\pm$ 12.0}& 76.9 $\pm$ 13.8 &   79.5 $\pm$ 13.54 &  82.4 $\pm$ 12.8 & 82.7 $\pm$ 14.8 & 54.8 $\pm$ 14.4& 72.0 $\pm$ 13.7&   \textbf{85.5 $\pm$ 12.6 $\bullet$}& 65.6 $\pm$ 15.7& 48.4 $\pm$ 13.8  \\
Musk-2      &\textbf{87.2 $\pm$ 9.9}& 74.4 $\pm$ 14.1&   75.6 $\pm$ 16.4&      75.2 $\pm$ 18.2& 83.1 $\pm$ 11.1& 77.3 $\pm$ 16.5& \textbf{88.2 $\pm$ 10.8 $\bullet$ }&   72.9 $\pm$ 13.9& 59.0 $\pm$ 13.0& 75.6 $\pm$ 21.8  \\
Elephant    &\textbf{80.0 $\pm$ 8.8}& 75.2 $\pm$ 8.4 &   79.4 $\pm$ 9.9 & 78.3 $\pm$ 8.6& 76.1 $\pm$  8.7& 67.1 $\pm$  8.8& \textbf{82.0 $\pm$ 7.3 $\bullet$ }&   78.8 $\pm$ 8.3&  71.4 $\pm$ 10.3& 54.0 $\pm$ 4.3  \\
Fox        & \textbf{58.5 $\pm$ 9.1}& 55.7 $\pm$ 10.6  &   57.6 $\pm$ 12.0&      52.8 $\pm$ 9.2& 58.3 $\pm$ 12.3& 56.5 $\pm$  7.8& \textbf{63.8 $\pm$ 10.6 $\bullet$}&   53.3 $\pm$ 10.1& 53.9 $\pm$  9.5& 58.0 $\pm$ 9.8  \\
Tiger       &\textbf{79.0 $\pm$ 8.8 $\bullet$} & 59.8 $\pm$ 9.7 & 73.5 $\pm$ 9.6& \textbf{75.5 $\pm$ 9.1}& 69.8 $\pm$ 10.6& 65.9 $\pm$  4.9& \textbf{74.5 $\pm$  8.4}&   74.2 $\pm$ 9.5&  57.1 $\pm$ 11.6& 54.8 $\pm$ 4.2  \\
UCSB Breast &\textbf{61.6 $\pm$ 9.6}& 50.5 $\pm$ 12.8 & 48.4 $\pm$ 14.6& 57.5 $\pm$ 19.5 & \textbf{69.1 $\pm$ 20.6 $\bullet$}& 55.5 $\pm$  7.1& 55.4 $\pm$  6.7&   50.5 $\pm$ 21.5& 57.4 $\pm$ 21.5& 55.6 $\pm$ 7.2  \\

Eastwest &\textbf{61.0 $\pm$ 21.7}& \textbf{75.0 $\pm$ 23.9 $\bullet$} & 44.0 $\pm$ 27.9 & 59.0 $\pm$ 22.7 & 59.7 $\pm$ 29.8 & 48.7 $\pm$  23.3 & 49.6 $\pm$  26.5  &  57.5 $\pm$ 25.0 & 61.0 $\pm$ 20.9 & 54.0 $\pm$ 20.3  \\
Westeast & 52.0 $\pm$ 19.0& 40.0 $\pm$ 17.7 & 44.0 $\pm$ 24.0 & 55.0 $\pm$ 23.9 & \textbf{61.4 $\pm$ 25.5$\bullet$}& 51.6 $\pm$  25.8 & 44.1 $\pm$  25.4 &   50.7 $\pm$ 23.9 & xx $\pm$ yy & \textbf{59.8 $\pm$ 14.8}  \\
Atom &\textbf{80.8 $\pm$ 8.2}& 66.5 $\pm$ 0.4 & 64.7 $\pm$ 9.3 & 72.4 $\pm$ 5.5 & 78.3 $\pm$ 9.4 & \textbf{84.5 $\pm$  10.4 $\bullet$}& 65.2 $\pm$  10.0 &  80.4 $\pm$ 8.8 & 58.6 $\pm$ 13.9 & 64.0 $\pm$ 9.3  \\
Bond &\textbf{81.5 $\pm$ 7.5 $\bullet$}& 66.6 $\pm$ 2.6 & 66.6 $\pm$ 12.4 & xx $\pm$ yy & \textbf{79.0 $\pm$ 10.3} & 72.2 $\pm$  6.4 & 62.2 $\pm$  25.6 &   78.7 $\pm$ 8.6 & 61.1 $\pm$ 12.3 &  69.2 $\pm$ 14.6  \\
Chain & 77.1 $\pm$ 9.5& 66.4 $\pm$ 1.1 & 65.2 $\pm$ 10.0 & xx $\pm$ yy & 71.0 $\pm$ 9.5 & \textbf{84.8 $\pm$  6.6 $\bullet$} & 66.3 $\pm$  10.9 &   \textbf{84.7  $\pm$ 7.1} & 66.4 $\pm$  10.5 & 72.1 $\pm$ 15.7  \\
\bottomrule
\end{tabular}
\end{table*}

We choose Musk-1, Musk-2, Elephant, Fox, Tiger, and UCSB to test {PIGMIL}'s classification accuracy on real-world compared to APR, MILD, mi-Sim, Citation kNN, MI-Kernel, MILES, miGraph, Clustering MIL, and MInD(Hausdorff).
The accuracy comparison is reported in Table \ref{tab:bagacc_com} where PIGMIL achieved a competitive performance. PIGMIL outperformed most other methods, especially on Elephant, Fox, and Tiger possibly because TPIs of these data sets (regions of interest, ROIs) are easier to discriminate compared to others, like a specific drug molecule shape in Musk-1 and Musk-2.

\section{Discussion}\label{dis}

\subsection{Sensitivity to Global Similarity ($\mathcal{S}$+$\mathcal{C}$) and Robust Discrimination ($\mathcal{D}$)}\label{dis:sub1}

PIGMIL can capture the global similarity ($\mathcal{S}$+$\mathcal{C}$) of TPIs and their robust discrimination ($\mathcal{D}$) from negative instances. To measure the influence of $\mathcal{S}$+$\mathcal{C}$ and $\mathcal{D}$ on TPI detection accuracy, we changed the ratio of $\mathcal{D}$ to $\mathcal{S}$+$\mathcal{C}$ (scaling $\mathcal{D}$ to different levels).
Figures \ref{fig:SC_D} (a), (b), and (c) present the change of TPI detection accuracy with different ratio of $\mathcal{S}$+$\mathcal{C}$ and $\mathcal{D}$ on BASIC, RHOMBUS, and RING separately.

According to Figure \ref{fig:SC_D} (a), the accuracy increased when the ratio became bigger, which indicated that $\mathcal{D}$ contributed more to the accuracy than $\mathcal{S}$+$\mathcal{C}$ on this kind of data set.
In Figure \ref{fig:SC_D} (b), the highest accuracy was reached when $\mathcal{S}$+$\mathcal{C}$ and $\mathcal{D}$ were at the same order of magnitude. This was mainly because TPIs or negative instances were symmetrical so that $\mathcal{S}$+$\mathcal{C}$ and $\mathcal{D}$ played the similar important roles.
Figure \ref{fig:SC_D} (c) indicates that {$\mathcal{D}$}'s increase contributed to the increase of accuracy while the contribution was limited.

\subsection{Sensitivity to Noise}\label{dis:sub2}

\begin{figure}[!t]
 \centering
 \subfigure[BASIC]{\includegraphics[scale =0.13]{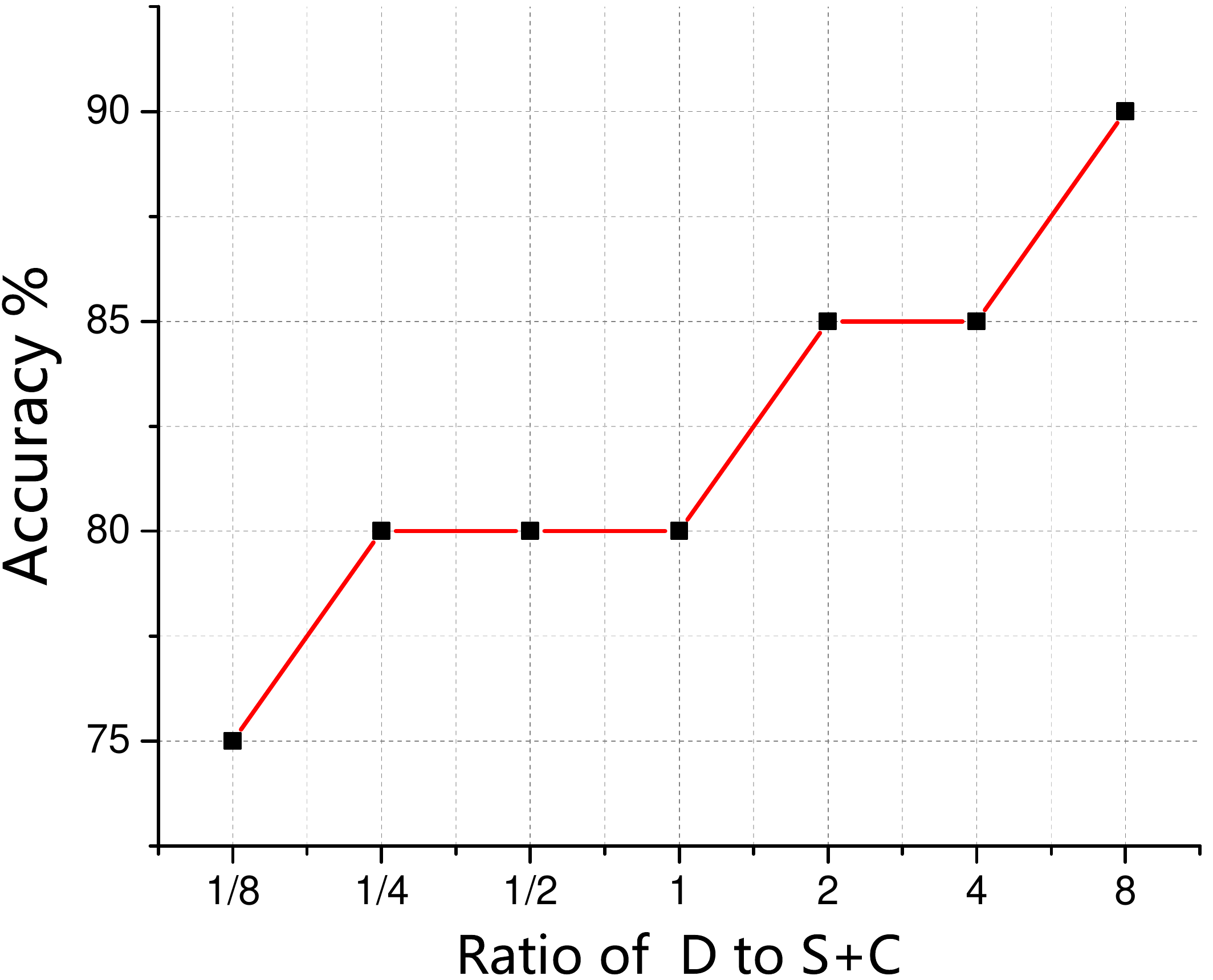}}~\vspace{0.1cm}
 \subfigure[RHOMBUS]{\includegraphics[scale =0.13]{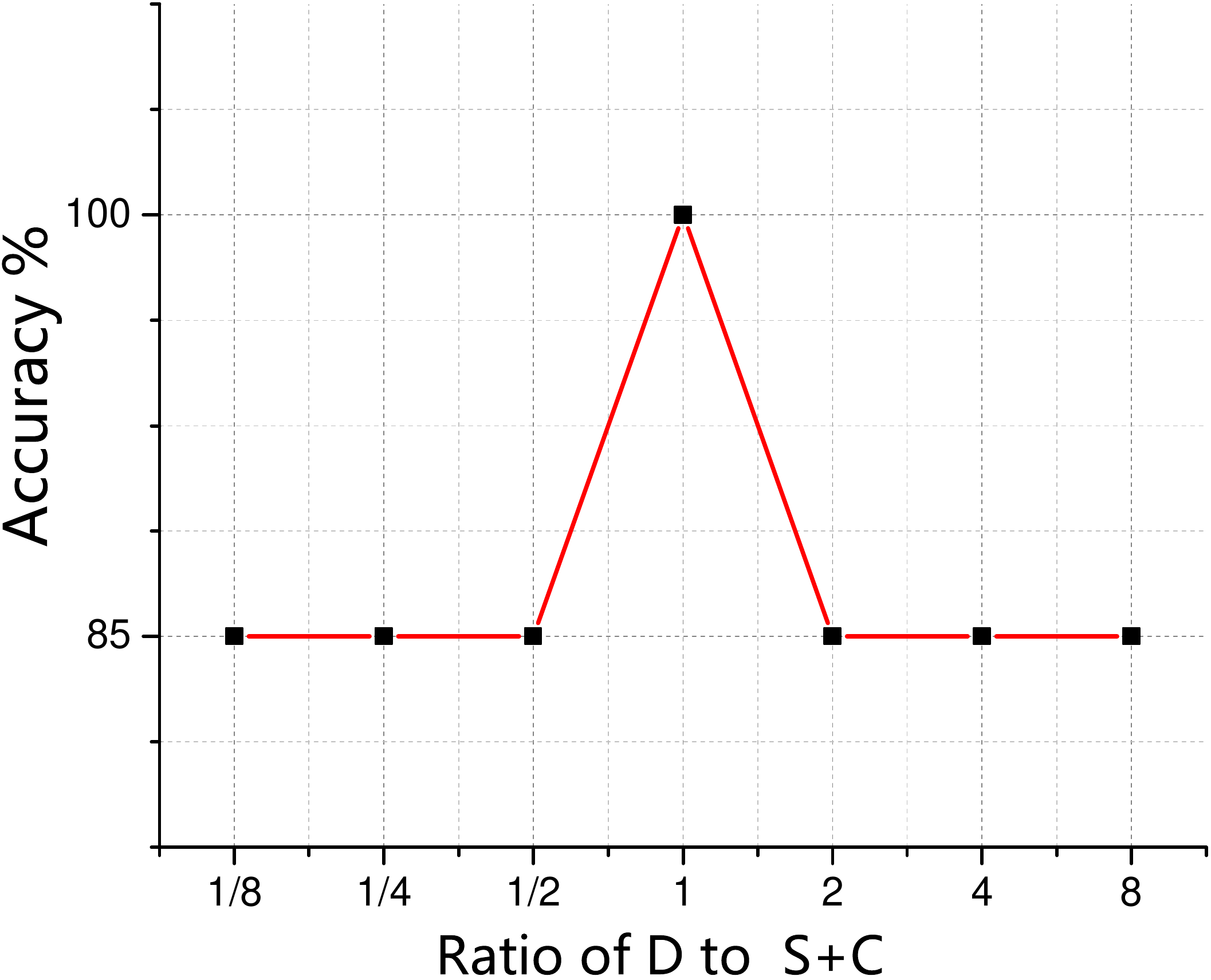}}~\vspace{-0.2cm}
 \subfigure[RING]{\includegraphics[scale =0.13]{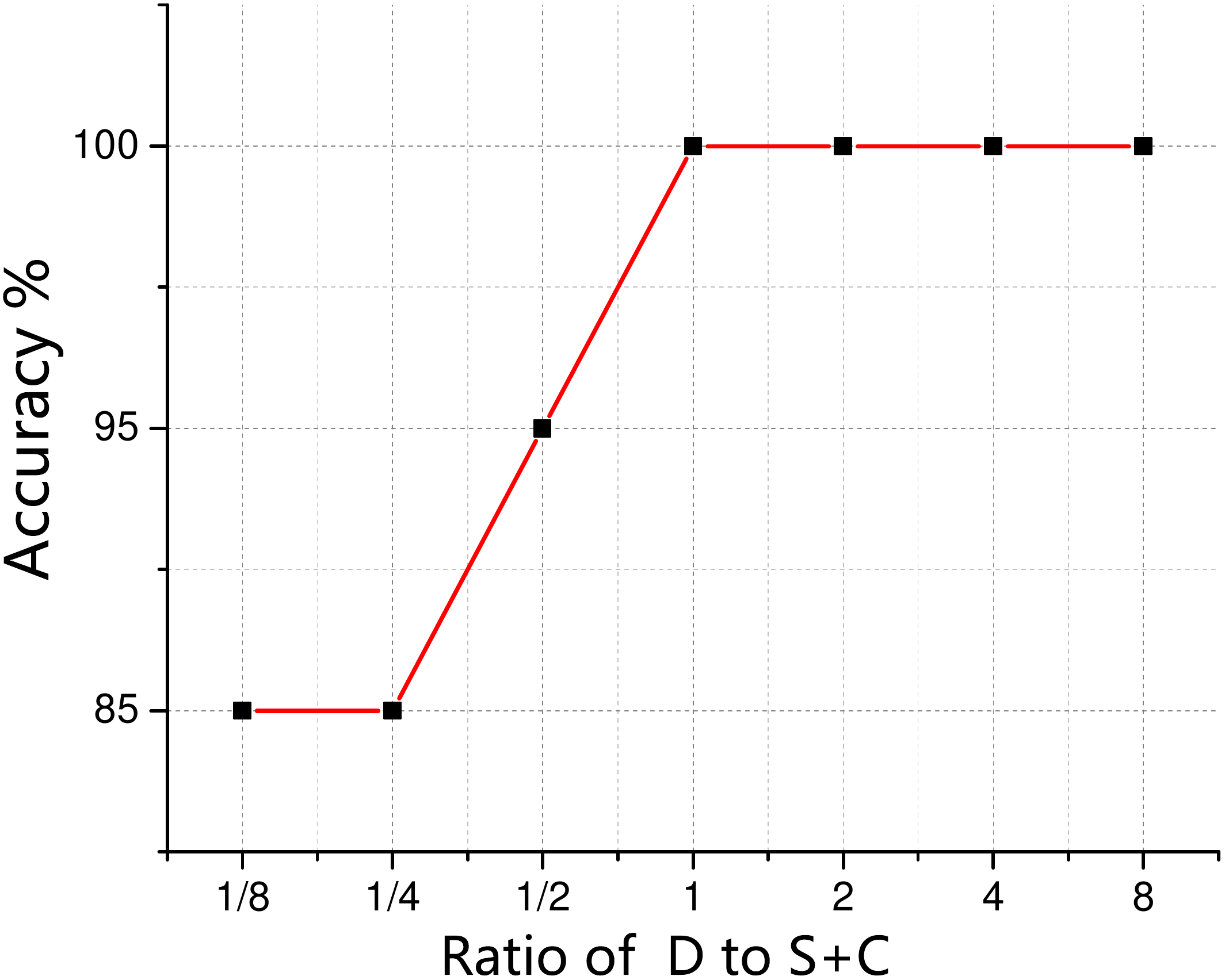}}~\vspace{-0.2cm}
 \caption{TPI detection accuracy of PIGMIL with different ratios of the global similarity ($\mathcal{S}$+$\mathcal{C}$) of TPIs and the robust discrimination ($\mathcal{D}$) on \textit{BASIC}, \textit{RHOMBUS}, and \textit{RING}. Specifically, the ratio of '2' indicates that $\mathcal{D}$ is scaled to the twice order of magnitude of $\mathcal{S}$+$\mathcal{C}$.}
 \label{fig:SC_D}\vspace{-0.3cm}
\end{figure}

\begin{figure}[!t]
 \centering
 \subfigure[Sensitivity to Noise]{\includegraphics[scale =0.19]{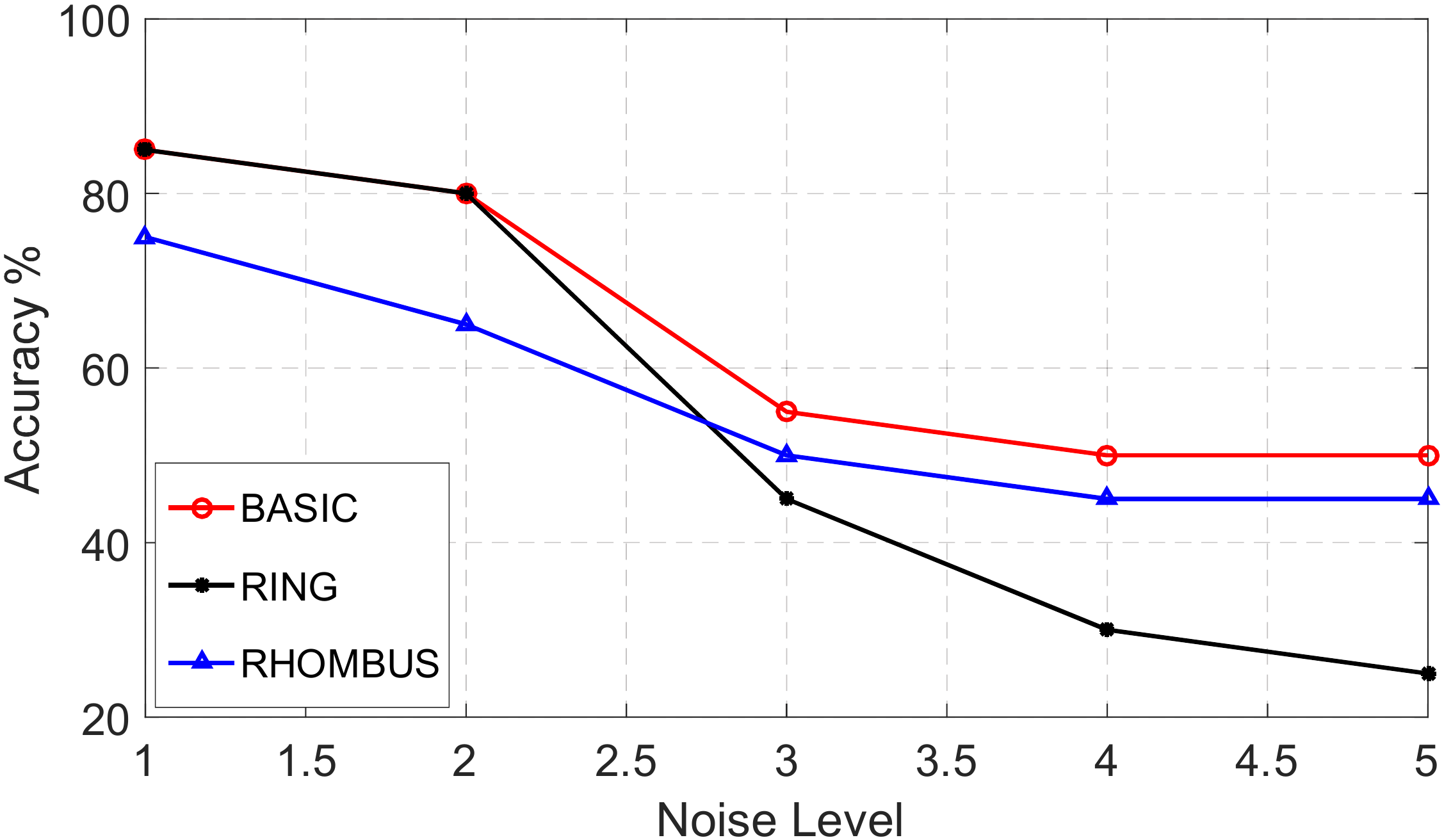}}~\vspace{0.1cm}
 \subfigure[Sensitivity to Size of $\mathcal{WS}$]{\includegraphics[scale =0.19]{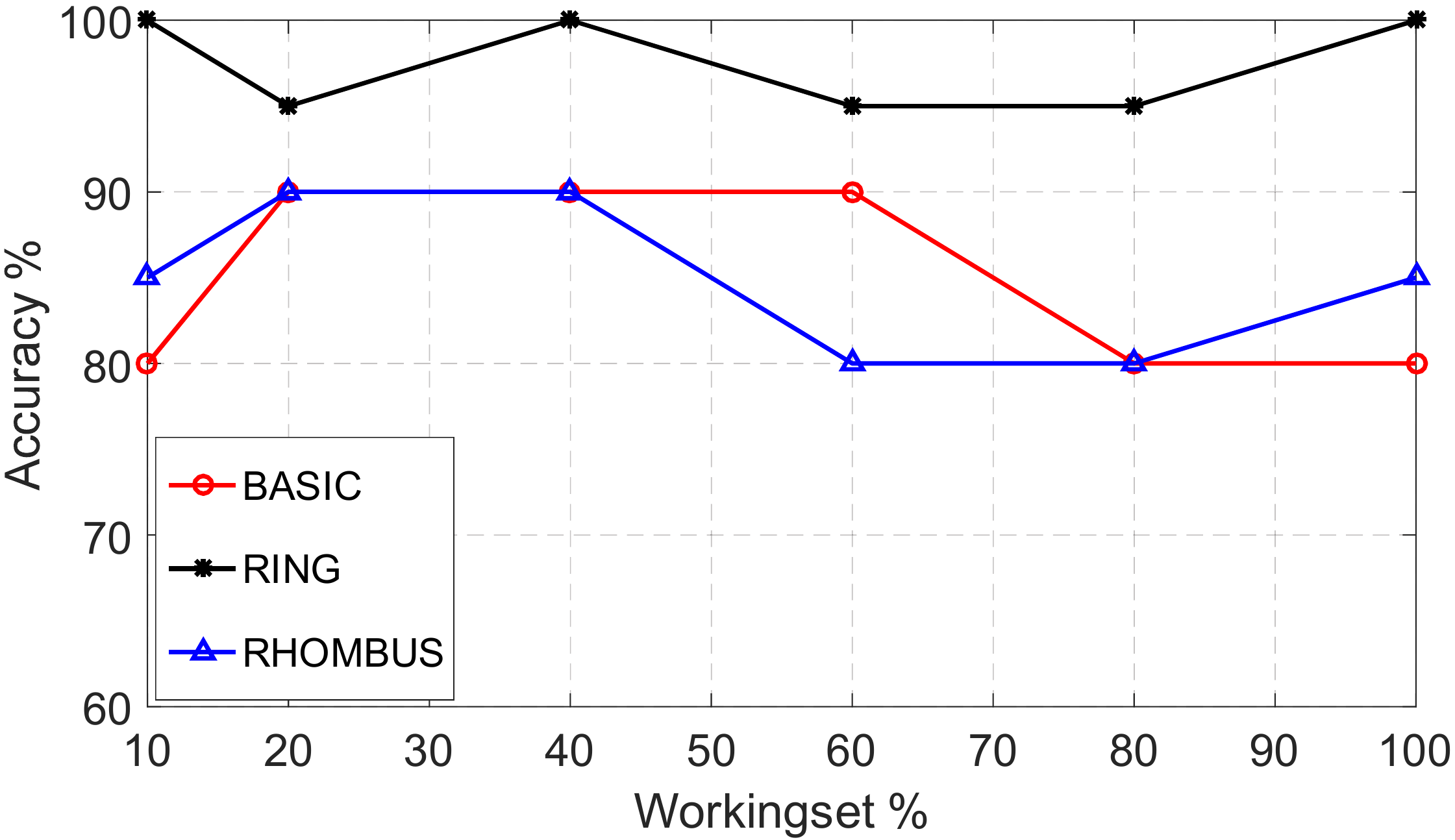}}~\vspace{-0.2cm}
 \caption{(a) TPI detection accuracy of PIGMIL with different noise levels on BASIC, RHOMBUS, and RING. Specifically, noise level '3' indicates that the labels of 20\% positive instances are changed into negative ones and the labels of 30\% negative instances are changed into positive ones. (b) TPI detection accuracy of PIGMIL with different sizes of $working \ set$ ($\mathcal{WS}$) on BASIC, RHOMBUS, and RING. '40\%' for $\mathcal{WS}$ indicates the size of $\mathcal{WS}$ is set to be 40\% of its corresponding $working \ bag$ ($\mathcal{WB}$).}
 \label{fig:dis_noise_ws}\vspace{-0.3cm}
\end{figure}

To evaluate {PIGMIL}'s ability to cope with noise, we added some noise to BASIC, RHOMBUS, and RING. In Figure \ref{fig:dis_noise_ws} (a), the noise level indicates how many {instances}' labels are changed.

According to Figure \ref{fig:dis_noise_ws} (a), the accuracy decreased when noise level increased. However, the decrease of accuracy was slowed down when noise level became bigger (e.g., the decrease of accuracy when noise level changed into '5' from '4' was smaller than that when changed into '2' from '3'), which demonstrated {PIGMIL}'s ability to cope with noise.
Moreover, accuracy decreased more sharply on RING than that on BASIC and RHOMBUS. This was because TPIs of RING are more centralized and show a greater difference from negative instances than TPIs of BASIC and RHOMBUS. So it was more hard for PIGMIL to detect TPIs on RING if TPIs were labelled negative.

\subsection{Sensitivity to Size of Working Set ($\mathcal{WS}$)}\label{dis:sub3}

We changed the size of $\mathcal{WS}$ to evaluate the influences of $working \ set$ ($\mathcal{WS}$) with different size on {PIGMIL}'s detection accuracy of TPIs.

Figure \ref{fig:dis_noise_ws} (b) reports the accuracy for different size of $\mathcal{WS}$ on BASIC, RHOMBUS, and RING and the $working \ set$ (\%) indicates the size of $\mathcal{WS}$ compared to its corresponding $working \ bag$ ($\mathcal{WB}$).
For BASIC and RHOMBUS, the highest accuracy was reached when the size of $\mathcal{WS}$ was about 40\% (of a positive bag). This was because some instances in positive bags are the false positive instances (FPIs) that can provide little information to detect TPIs if they are included into $\mathcal{WS}$.
For RING, the accuracy did not change significantly when the size of $\mathcal{WS}$ changed. This was because TPIs in RING are obviously different from negative instances (including FPIs). So TPIs will be included into $\mathcal{WS}$  successfully even if the size of $\mathcal{WS}$ is small, let alone if the size of $\mathcal{WS}$ is big.

\section{Conclusion}\label{conl}

Positive instance detection is key to MIL. Various methods have been developed for this issue while suffering some disadvantages, such as ignoring global similarity among positive instances and irrelevance between negative ones. To this end, a positive instance detection via graph updating for multiple instance learning (PIGMIL) is proposed.
PIGMIL first constructs positive candidate pool (PCP) from $working \ sets$ ($\mathcal{WS}s$) of some $working \ bags$ ($\mathcal{WB}s$) to transform positive instance detection into an optimization problem. Then based on a consistent similarity and discrimination graph (CSDG), this problem is solved approximately by an instance updating strategy. Finally a bag classification scheme is constructed to classify a new bag. Extensive experiments demonstrated {PIGMIL}'s great ability to detect $TPIs$ and that it outperformed other baseline methods.

%
%

%
%

\bibliographystyle{IEEEtran}
\bibliography{IEEEabrv,xu16}

\end{document}